\newtheorem{theorem}{Theorem}
\title{Bridging Modalities via Progressive Re-alignment for \\Multimodal Test-Time Adaptation}
\author{
    Jiacheng Li\textsuperscript{\rm 1, \rm 3}, Songhe Feng\textsuperscript{\rm 2, \rm 3}\thanks{Corresponding Author.}
}
\begin{document}

\maketitle

\begin{abstract}
Test-time adaptation (TTA) enables online model adaptation using only unlabeled test data, aiming to bridge the gap between source and target distributions. However, in multimodal scenarios, varying degrees of distribution shift across different modalities give rise to a complex coupling effect of unimodal shallow feature shift and cross-modal high-level semantic misalignment, posing a major obstacle to extending existing TTA methods to the multimodal field. To address this challenge, we propose a novel multimodal test-time adaptation (MMTTA) framework, termed as \textbf{Bri}dging \textbf{M}odalities via \textbf{P}rogressive \textbf{R}e-alignment (\textbf{BriMPR}). BriMPR, consisting of two progressively enhanced modules, tackles the coupling effect with a divide-and-conquer strategy. Specifically, we first decompose MMTTA into multiple unimodal feature alignment sub-problems. By leveraging the strong function approximation ability of prompt tuning, we calibrate the unimodal global feature distributions to their respective source distributions, so as to achieve the initial semantic re-alignment across modalities. Subsequently, we assign the credible pseudo-labels to combinations of masked and complete modalities, and introduce inter-modal instance-wise contrastive learning to further enhance the information interaction among modalities and refine the alignment. Extensive experiments on MMTTA tasks, including both corruption-based and real-world domain shift benchmarks, demonstrate the superiority of our method.
\end{abstract}

\begin{links}
    \link{Code}{https://github.com/Luchicken/BriMPR}
\end{links}

\section{Introduction}
Despite the remarkable success of deep neural networks in various fields, their excellent performances often hinge on specific data conditions. The possible distribution shift (or domain shift) between training and testing data has become a major obstacle to model generalization. Unsupervised domain adaptation (UDA)~\cite{DDC, DAN, MCC} and domain generalization (DG)~\cite{MixStyle, MLDG, 10.1145/3581783.3611764} have been proposed to mitigate domain gaps by designing sophisticated strategies that enable the model to adapt to the target domain during training. In contrast, test-time adaptation (TTA)~\cite{TTT, Tent, MEMO, EATA} adjusts the model according to specific test data during the test stage, reducing the dependence on the training process and training data, thereby making it a promising and more practical solution.

With the advancement of sensor technology, integrating and leveraging multimodal data collected from diverse sensors has significantly enhanced the perception capability of intelligent systems. Nevertheless, multimodal data also suffer from distribution shifts. What's worse, due to the complexity of multimodal data, different modalities often exhibit varying degrees of distribution shift from the source domain, inducing a complex coupling effect of unimodal shallow feature shift and cross-modal high-level semantic misalignment. Existing TTA methods, which are primarily designed for unimodal tasks, struggle to ensure consistent improvements across all modalities and often fail to fully exploit the rich information available in multimodal inputs. 
In Fig.~\ref{fig:tsne}, we visualize both unimodal and multimodal feature representations during the adaptation on the audio-visual event classification dataset Kinetics50–C~\cite{READ}. As a representative unimodal TTA method, EATA~\cite{EATA} reduces the uncertainty of model predictions by minimizing the entropy of reliable samples. However, it shows limited improvement in bridging the domain gap between source and target features for each modality. READ~\cite{READ}, a pioneering method for multimodal test-time adaptation (MMTTA), adapts the model by updating the self-attention layers in the fusion module to assign more weights to  the high-quality modality. Nevertheless, it lacks the correction of shallow unimodal features. 
As shown in Fig.~\ref{fig:tsne_EATA} and Fig.~\ref{fig:tsne_READ}, the lack of effective guidance for unimodal features hinders proper alignment across modalities. As a result, the fused multimodal feature representations derived from multiple unimodal features become entangled, leading to a significant decline in discriminability.

\begin{figure*}[t]
\centering

\begin{subfigure}{0.286\textwidth}
\includegraphics[width=1\textwidth]{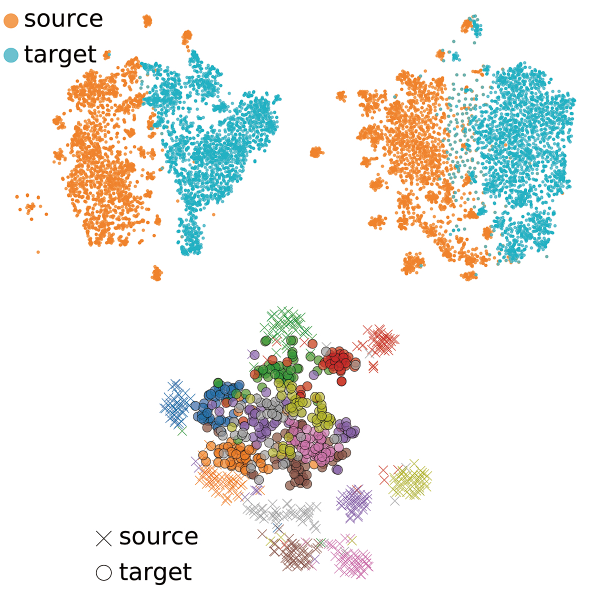}
\caption{EATA}
\label{fig:tsne_EATA}
\end{subfigure}
\hspace{3mm}
\begin{subfigure}{0.286\textwidth}
\includegraphics[width=1\textwidth]{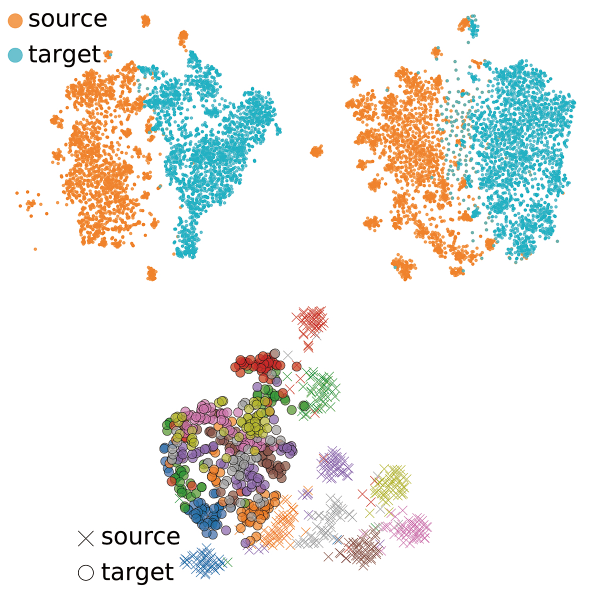}
\caption{READ}
\label{fig:tsne_READ}
\end{subfigure}
\hspace{3mm}
\begin{subfigure}{0.286\textwidth}
\includegraphics[width=1\textwidth]{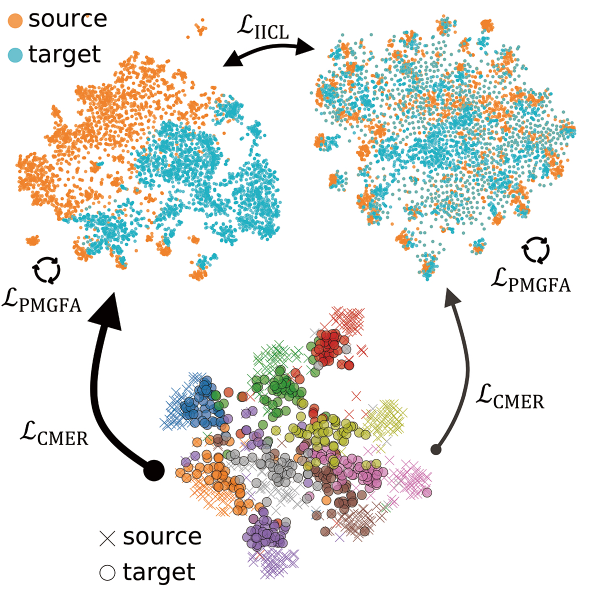}
\caption{BriMPR}
\label{fig:tsne_BriMPR}
\end{subfigure}
  
\caption{t-SNE visualizations of unimodal (top) and fused multimodal (bottom) features during adaptation versus source features. For fused features, 10 classes from Kinetics50–C are shown.}
\label{fig:tsne}
\end{figure*}

In this work, we propose $\textbf{Bri}$dging $\textbf{M}$odalities via $\textbf{P}$rogressive $\textbf{R}$e-alignment ($\textbf{BriMPR}$) for multimodal test-time adaptation. Through the joint efforts of self-calibration for each modality and inter-modal information interaction, BriMPR realigns the modalities that are subject to distribution shift with each other. 
Since the feature representations of each modality are well-aligned in the source space, we first decompose MMTTA into multiple unimodal feature alignment sub-problems. Leveraging the strong function approximation ability of prompt tuning~\cite{NEURIPS2023_eef6aecf}, we calibrate the global feature distribution of each modality to its corresponding source distribution via modality-specific prompts embedded across layers of the modality-specific encoders, thereby indirectly achieving initial cross-modal semantic alignment. 
Subsequently, the alignment is further refined by enhancing inter-modal information interaction. We propose a novel cross-modal masked embedding recombination loss, which promotes the extraction of multimodal information by providing calibrated pseudo-labels for the combinations of masked and complete modalities. Additionally, we introduce inter-modal instance-wise contrastive learning to maintain cross-modal alignment at the instance level. As shown in Fig.~\ref{fig:tsne_BriMPR}, BriMPR effectively bridges the domain gap between the source and target for each unimodal feature, thereby enhancing the discriminability of the fused features. Our contributions can be summarized as follows:
\begin{itemize}
\item We propose a novel MMTTA framework which mitigates modality-wise distribution shifts in a divide-and-conquer manner, facilitating the re-alignment among modalities.
\item We leverage the excellent function approximation ability of prompt tuning to achieve efficient calibration of the unimodal global feature distribution, and propose a novel cross-modal masked embedding recombination strategy to enhance the inter-modal interaction.
\item We conduct extensive experiments on MMTTA benchmarks, including corruption shift and real-world shift datasets, demonstrating the superiority of BriMPR over existing SOTA methods.
\end{itemize}

\section{Related Work}
\subsubsection{Test-Time Adaptation.}
Test-time adaptation (TTA) leverages unlabeled test data to adapt models to unseen target domains during test-time. 
The idea of TTA can be traced back to TTT~\cite{TTT}, which uses a self-supervised auxiliary branch to enable adaptation during inference. A series of works~\cite{Tent, EATA, SAR, DeYO} explore fully test-time adaptation (FTTA) by optimizing the normalization layers via entropy-based losses, without altering the pre-training stage. 
Given the limitations of unimodal TTA methods in multimodal scenarios, MM-TTA~\cite{MM-TTA} proposes a cross-modal self-learning framework for MMTTA. READ~\cite{READ} highlights the reliability bias of MMTTA under unimodal corruption, and proposes to adaptively assign modality weights by optimizing the self-attention in the fusion module. ABPEM~\cite{ABPEM} reduces the gap between cross-attention and self-attention, and computes the principal part of entropy to reduce gradient noise. SuMi~\cite{SuMi} utilizes interquartile range smoothing to identify samples used for calculating entropy loss. Moreover, AEO~\cite{AEO} introduces unseen classes and proposes the Multimodal open-set test-time adaptation setting. 
In this work, we attribute the difficulties of MMTTA to the coupling effect of unimodal shallow feature shift and cross-modal high-level semantic misalignment, and propose a divide-and-conquer method to re-bridge modalities during testing.

\subsubsection{Prompt Tuning.}
Originally developed in natural language processing, prompt tuning introduces extra tokens to guide models toward generating task-specific outputs. 
In computer vision, approaches like CoOp~\cite{CoOp} and CoCoOp~\cite{CoCoOp} leverage learnable prompts to enhance the zero-shot recognition capabilities of vision-language models (VLMs). Integrating the idea of TTA, test-time prompt tuning (TPT)~\cite{TPT, DiffTPT, HisTPT} fine-tunes text prompts using test samples to improve the generalization of VLMs. 
While TPT primarily focuses on extracting rich knowledge from large-scale VLMs, our work is more closely aligned with visual prompt tuning (VPT)~\cite{VPT, GPT}. VPT introduces prompt tuning into Vision Transformer, achieving significant performance gains over full fine-tuning. 
Our work extends prompt tuning to MMTTA tasks, leveraging the strong function approximation ability of prompts to efficiently calibrate the distribution of each unimodal feature—not limited to visual features alone.

\begin{figure*}[t]
\centering
\includegraphics[width=0.81\textwidth]{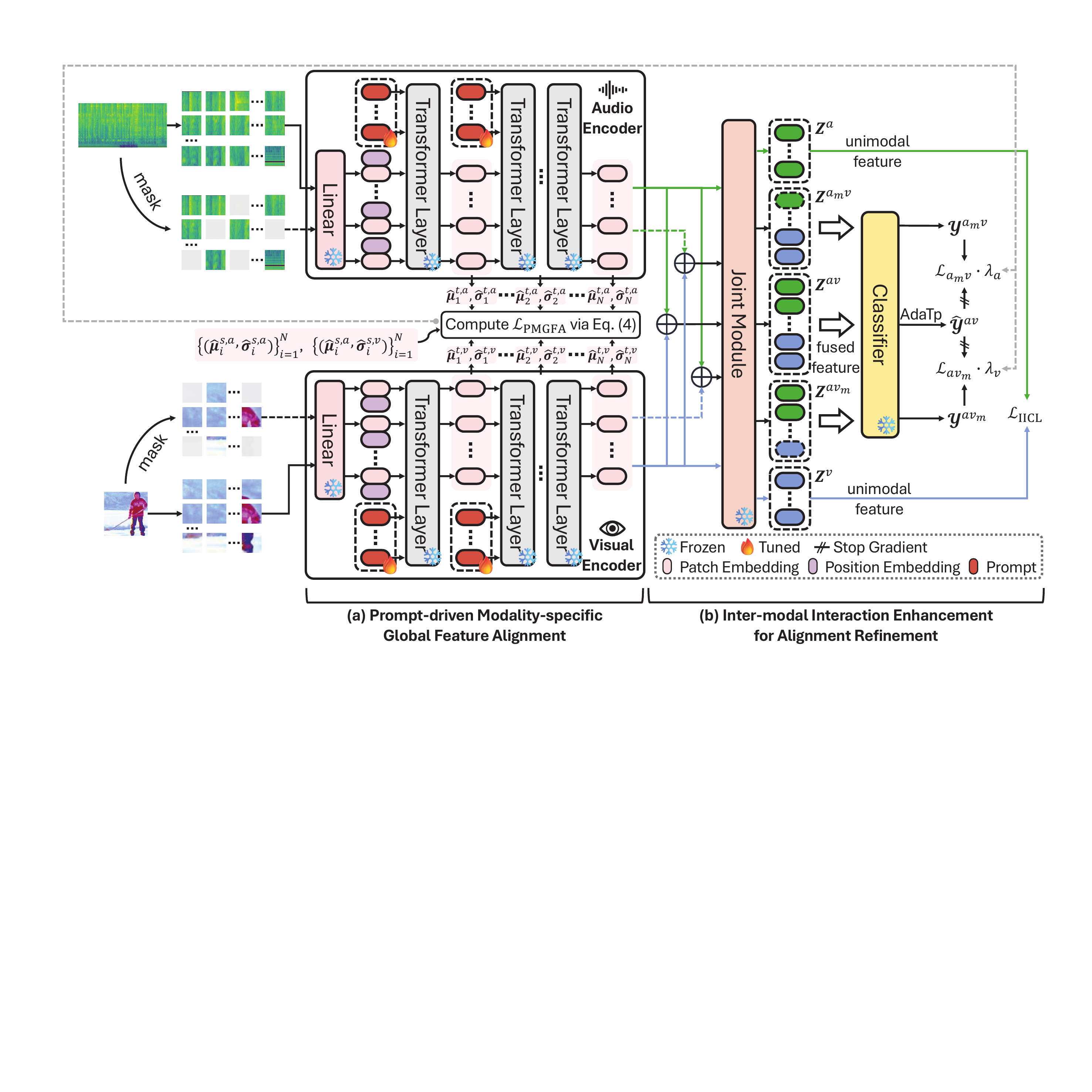}
\caption{Overview of BriMPR. BriMPR achieves initial alignment and alignment refinement through two progressive modules. The added modality-specific prompts are used to project the unimodal features into the re-aligned feature space.}
\label{fig:overview}
\end{figure*}

\section{Preliminaries}
\subsubsection{Multimodal Test-Time Adaptation (MMTTA).}
Without loss of generality, we take two modalities as an example to provide a formal definition of MMTTA. 
An off-the-shelf model $\mathcal{F}_{\Theta}$ pre-trained on the source domain $\mathcal{D}_{\mathcal{S}} = \{ (\mathbf{x}_i^{u_1}, \mathbf{x}_i^{u_2}, y_i) \}_{i=1}^{N_\mathcal{S}}$ is adopted as the initial model, where the two modalities of the source data follow the probability distributions $\mathbf{x}_i^{u_1}\sim P_{\mathcal{S}, u_1} ( \mathbf{x} )$ and $\mathbf{x}_i^{u_2}\sim P_{\mathcal{S}, u_2} ( \mathbf{x} )$, respectively. 
The goal of MMTTA is to adapt $\mathcal{F}_{\Theta}$ online to the target domain $\mathcal{D}_{\mathcal{T}} = \{ (\mathbf{x}_j^{u_1}, \mathbf{x}_j^{u_2}) \}_{j=1}^{N_\mathcal{T}}$, where the two modalities of target data follow the probability distributions $\mathbf{x}_j^{u_1}\sim P_{\mathcal{T}, u_1} ( \mathbf{x} )$ and $\mathbf{x}_j^{u_2}\sim P_{\mathcal{T}, u_2} ( \mathbf{x} )$. 
During adaptation, the source domain is inaccessible and there is a domain shift between the source and target distributions, i.e., $P_{\mathcal{S}, u_1} (\mathbf{x}) \ne P_{\mathcal{T}, u_1} (\mathbf{x})$ and $P_{\mathcal{S}, u_2} (\mathbf{x}) \ne P_{\mathcal{T}, u_2} (\mathbf{x})$.

\subsubsection{Prompt Tuning.}
Prompt tuning is regarded as a parameter-efficient fine-tuning technique, which adapts the model to downstream tasks by prepending and optimizing learnable prompt tokens into the input sequence~\cite{prefixtuning, prompttuning, VPT, ptuning}. 
For an encoder $\Phi$ consisting of $N$ transformer layers, when inserting a specified number of prompts into the input sequence at each layer, the forward process of the $i$-th layer can be formulated as: 
\begin{equation}
    \left[\_; \boldsymbol{E}_i\right] = L_{i} \left( \left[\boldsymbol{P}_{i-1}; \boldsymbol{E}_{i-1}\right] \right), \quad i=1,..., N.
    \label{eq:prompt}
\end{equation}
Here $\boldsymbol{E}_i = [\boldsymbol{e}_{i,1};\boldsymbol{e}_{i,2};\dots;\boldsymbol{e}_{i,m}]$ and $\boldsymbol{P}_i = [\boldsymbol{p}_{i,1};\boldsymbol{p}_{i,2};\dots;\allowbreak \boldsymbol{p}_{i,m_p}]$ denote the sequences of original input tokens and inserted prompt tokens, where $m$ and $m_p$ is the number of tokens, and the token dimension is $d$. $\left[\cdot; \cdot\right]$ denotes token-level concatenation. 
Then, a supervised loss $\mathcal{L}$ is minimized over the downstream dataset $\mathcal{D}_{\text{ds}}$ to obtain the optimal prompt $\boldsymbol{P}^* = \{\boldsymbol{P}_0^*, \boldsymbol{P}_1^*, \dots, \boldsymbol{P}_{N-1}^*\}$:
\begin{equation}
    \boldsymbol{P}^{*} = \underset{\boldsymbol{P}} {\operatorname{a r g \, min}} \ \mathbb{E}_{( \mathbf{x}, y ) \sim\mathcal{D}_{\text{ds}}} {\mathcal{L}} ( h ( \operatorname{MeanPool}(\boldsymbol{E}_N) ), y ),
    \label{eq:pt}
\end{equation}
where $h$ denotes the classifier. In MMTTA, due to the absence of annotation for the test data, the loss must be reformulated to enable the learning of task-specific prompts.

\section{Methodology}
\label{Methodology}
In this section, we introduce BriMPR for MMTTA, with its overall framework illustrated in Fig.~\ref{fig:overview}. 
BriMPR comprises two progressively enhanced modules: 
(a) \textit{Prompt-driven Modality-specific Global Feature Alignment} achieves initial cross-modal alignment by minimizing the discrepancy between the unimodal target statistics and their corresponding in-distribution statistics; 
(b) \textit{Inter-modal Interaction Enhancement for Alignment Refinement} further refines the alignment by providing credible pseudo-labels for combinations of masked and complete modalities, and conducting inter-modal instance-wise contrastive learning.

Following READ~\cite{READ}, we decompose the source model into two modality-specific encoders ($\Phi^a$ for the audio modality and $\Phi^v$ for the visual modality), a joint module $\Psi$, and a classifier $h$. 
We update only the prompts for each modality-specific encoder, keeping the rest of the model frozen, to recalibrate individual feature distributions and achieve bottom-up modality re-alignment.

\subsection{Prompt-driven Modality-specific Global Feature Alignment (PMGFA)}
\label{PMGFA}
The final prediction of a multimodal model comes from the joint effect of multiple individual modalities. This naturally allows MMTTA to be decomposed into multiple unimodal test-time adaptation problems. 
On the other hand, if the target representations at test time can be well projected back to the corresponding source representations, then a TTA model tends to perform well. 
Based on the intuitions above, we decouple MMTTA into multiple modality-specific feature alignment sub-problems. Since the inter-modal semantic representations are well aligned in the source representation space, solving these sub-problems means indirectly achieving cross-modal semantic alignment of the target representation.

Concretely, we first model the modality-specific source and target feature distributions as multivariate Gaussian distributions, i.e., $P_{\mathcal{S}, u} = \mathcal{N}(\mu^{s,u}, \Sigma^{s,u})$ and $P_{\mathcal{T}, u} = \mathcal{N}(\mu^{t,u}, \Sigma^{t,u})$, where $u \in \{a, v\}$. 
In prior works~\cite{TTT++, TTAC, Ada-ReAlign}, feature alignment is typically achieved by matching the first and second moments between distributions (i.e., $| | \mu^{t} - \mu^{s} | |_{2}^{2} + | | \Sigma^{t} - \Sigma^{s} | |_{F}^{2}$) or minimizing the KL-divergence (i.e., $D_{K L} ( P_{\mathcal{S}} || P_{\mathcal{T}} )$). However, both approaches rely on the estimation of the covariance matrix $\Sigma$, whose error is significantly amplified in high-dimensional data. Therefore, we propose to retain only the diagonal elements of $\Sigma$, which reduces the estimation error by a factor of $d$, as supported by the following theorem:
\begin{theorem}
\label{thm:covariance}
Given \( x_1, \dots, x_n \in \mathbb{R}^d \) independently drawn from a multivariate normal distribution \( \mathcal{N}(\mu, \Sigma) \), let \( \hat{\Sigma} \) be the unbiased sample covariance matrix and \( \hat{\sigma}^2 = [\hat{\sigma}_1^2, \dots, \hat{\sigma}_d^2]^T \) be the vector of its diagonal entries. Then, the mean squared errors satisfy:
\begin{equation}
    \mathbb{E}\left[\|\hat{\Sigma} - \Sigma\|_F^2\right] = \mathcal{O}\left(\frac{d^2}{n}\right), 
    \mathbb{E}\left[\|\hat{\sigma}^2 - \sigma^2\|_2^2\right] = \mathcal{O}\left(\frac{d}{n}\right).
\end{equation}
\end{theorem}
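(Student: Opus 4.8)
The plan is to exploit the unbiasedness of $\hat\Sigma$ to reduce both mean squared errors to sums of entrywise variances, and then to evaluate those variances using the fourth-moment structure of the Gaussian. First I would recall that $\mathbb{E}[\hat\Sigma]=\Sigma$ (so that $\mathbb{E}[\hat\sigma^2]=\sigma^2$ as well), which turns the errors into pure variance sums,
\begin{equation}
\mathbb{E}\bigl[\|\hat\Sigma-\Sigma\|_F^2\bigr]=\sum_{i,j=1}^{d}\operatorname{Var}(\hat\Sigma_{ij}),\qquad
\mathbb{E}\bigl[\|\hat\sigma^2-\sigma^2\|_2^2\bigr]=\sum_{i=1}^{d}\operatorname{Var}(\hat\Sigma_{ii}).
\end{equation}
Thus the entire statement reduces to one variance computation.

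Next I would compute $\operatorname{Var}(\hat\Sigma_{ij})$. The cleanest route is to use that $(n-1)\hat\Sigma$ follows the Wishart law $W_d(n-1,\Sigma)$, whose entries satisfy $\operatorname{Cov}(W_{ij},W_{kl})=(n-1)(\Sigma_{ik}\Sigma_{jl}+\Sigma_{il}\Sigma_{jk})$; dividing by $(n-1)^2$ and setting $k=i,\ l=j$ gives
\begin{equation}
\operatorname{Var}(\hat\Sigma_{ij})=\frac{\Sigma_{ii}\Sigma_{jj}+\Sigma_{ij}^2}{n-1}.
\end{equation}
(Equivalently, one can obtain the same formula directly from Isserlis' theorem applied to the centered Gaussians, at the cost of tracking the $\bar x$-correction terms and checking they are lower order.) Substituting into the two sums and using $\sum_{i,j}\Sigma_{ii}\Sigma_{jj}=(\operatorname{tr}\Sigma)^2$ and $\sum_{i,j}\Sigma_{ij}^2=\operatorname{tr}(\Sigma^2)$ yields the exact identities
\begin{equation}
\mathbb{E}\bigl[\|\hat\Sigma-\Sigma\|_F^2\bigr]=\frac{(\operatorname{tr}\Sigma)^2+\operatorname{tr}(\Sigma^2)}{n-1},\qquad
\mathbb{E}\bigl[\|\hat\sigma^2-\sigma^2\|_2^2\bigr]=\frac{2\sum_{i=1}^{d}\Sigma_{ii}^2}{n-1}.
\end{equation}

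Finally I would pass to the $\mathcal{O}$-bounds under the standing assumption (implicit in the statement) that $\Sigma$ has constant-order spectrum, i.e.\ $\|\Sigma\|_{\mathrm{op}}=\mathcal{O}(1)$, equivalently bounded variances along the coordinates. Then $\operatorname{tr}\Sigma=\mathcal{O}(d)$ and $\operatorname{tr}(\Sigma^2)=\mathcal{O}(d)$, so the numerator in the first identity is $\mathcal{O}(d^2)$, giving $\mathbb{E}[\|\hat\Sigma-\Sigma\|_F^2]=\mathcal{O}(d^2/n)$; likewise $\sum_i\Sigma_{ii}^2=\mathcal{O}(d)$ gives $\mathbb{E}[\|\hat\sigma^2-\sigma^2\|_2^2]=\mathcal{O}(d/n)$, which is precisely the claimed factor-$d$ reduction. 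I expect the only real subtlety to be notational rather than analytical: the $\mathcal{O}$-statements are meaningful only once one fixes what stays bounded as $d$ grows, so I would make that normalization explicit before invoking it. A secondary point worth stating cleanly is the mean-subtraction in $\hat\Sigma$ — invoking the Wishart distribution absorbs it automatically, whereas an elementary derivation must expand the $-\bar x$ terms and verify their $\mathcal{O}(1/n)$ contribution does not change the order.
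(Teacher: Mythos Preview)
Your proposal is correct and follows essentially the same route as the paper: reduce to entrywise variances via unbiasedness, invoke the Wishart law for $(n-1)\hat\Sigma$ to get $\operatorname{Var}(\hat\Sigma_{ij})=(\Sigma_{ii}\Sigma_{jj}+\Sigma_{ij}^2)/(n-1)$, sum, and then pass to the $\mathcal{O}$-bounds under a boundedness assumption on $\Sigma$. The only cosmetic difference is that the paper phrases the normalization as ``$\Sigma$ has bounded entries'' (giving $\|\Sigma\|_F^2=\mathcal{O}(d^2)$) rather than your bounded-operator-norm assumption (giving the sharper $\operatorname{tr}(\Sigma^2)=\mathcal{O}(d)$), but since $(\operatorname{tr}\Sigma)^2=\mathcal{O}(d^2)$ dominates either way, the conclusion is identical.
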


Due to space limitations, the corresponding proof can be found in Appendix. Emerging research~\cite{NEURIPS2023_eef6aecf} has shown that prompt tuning can serve as universal approximators for sequence-to-sequence functions. Motivated by this, we employ prompts as an implicit mapping from the target feature space to the source feature space. 
For the data $\mathbf{x}^u$ and the $i$-th layer of the modality-specific encoder $\Phi^u$, the input sequence $\boldsymbol{E}_{i-1}^u(\mathbf{x}^u)$ undergoes attention interaction with the added prompts $\boldsymbol{P}_{i-1}^u$ to obtain the transformed output sequence $\boldsymbol{E}_{i}^u(\mathbf{x}^u)$. The global feature representation can be expressed as $\boldsymbol{Z}_{i}^u(\mathbf{x}^u) = \operatorname{MeanPool}(\boldsymbol{E}_{i}^u(\mathbf{x}^u))$. Subsequently, we minimize the following empirical risk on the current batch $\{(\mathbf{x}_j^{a}, \mathbf{x}_j^{v}) \}_{j=1}^{B}$:
\begin{equation}
\begin{aligned}
&{\cal L}_{\mathrm{PMGFA}} = \sum_{u\in\{a,v\}} \operatorname
{Disc}\left({P}_{\mathcal{S}, u}, P_{\mathcal{T}, u}\right) \\
&= \sum_{u\in\{a,v\}} \frac{1} {N} \sum_{i=1}^{N} 
\left( \left\| \hat{\boldsymbol{\mu}}_{i}^{t,u}-\hat{\boldsymbol{\mu}}_{i}^{s, u} \right\|_{2}+\left\| \hat{\boldsymbol{\sigma}}_{i}^{t,u}-\hat{\boldsymbol{\sigma}}_{i}^{s, u} \right\|_{2} \right),
\end{aligned}
\end{equation}
where $\operatorname{Disc}\left(\cdot, \cdot\right)$ denotes the mean of the layer-wise distribution discrepancy. For convenience, we will interchangeably use $\operatorname{Disc}^u$ and $\operatorname
{Disc}({P}_{\mathcal{S}, u}, P_{\mathcal{T}, u})$ in the following context. $\left\| \cdot \right\|_{2}$ denotes the Euclidean norm. $\hat{\boldsymbol{\mu}}_{i}^{t,u} = \sum_{j=1}^B \boldsymbol{Z}_{i}^u(\mathbf{x}_j^{u}) / B$ and $\hat{\boldsymbol{\sigma}}_{i}^{t,u} = \sqrt{\sum_{j=1}^B [(\boldsymbol{Z}_{i}^u(\mathbf{x}_j^{u}) - \hat{\boldsymbol{\mu}}_{i}^{t,u})^2] / (B-1)}$ are the estimated mean and standard deviation, respectively. 
Similar to many other TTA methods~\cite{EATA, RMT, DA-TTA}, we pre-compute $\{\hat{\boldsymbol{\mu}}_{i}^{s, u}, \hat{\boldsymbol{\sigma}}_{i}^{s, u}\}_{i=1}^N$ offline prior to the test phase, and this process is performed only once.

\begin{table*}[t]
\centering
\setlength{\tabcolsep}{0.99mm}
\small
\begin{tabular}{lcccccccccccccccc}
\toprule
 &
  \multicolumn{3}{c}{Noise} &
  \multicolumn{4}{c}{Blur} &
  \multicolumn{4}{c}{Weather} &
  \multicolumn{4}{c}{Digital} &
   \\ \cmidrule(lr){2-4}\cmidrule(lr){5-8}\cmidrule(lr){9-12}\cmidrule(lr){13-16}
Method &
  Gauss. &
  Shot &
  Impul. &
  Defoc. &
  Glass &
  Motion &
  Zoom &
  Snow &
  Frost &
  Fog &
  Bright. &
  Contr. &
  Elast. &
  Pixel. &
  Jpeg &
  \cellcolor[cmyk]{0.1012,0.0945,0,0}Avg. \\ \midrule
Source &
  48.2 &
  50.0 &
  49.2 &
  67.7 &
  61.6 &
  70.6 &
  66.1 &
  60.9 &
  60.7 &
  44.7 &
  75.9 &
  51.8 &
  65.5 &
  68.7 &
  66.1 &
  \cellcolor[cmyk]{0.1012,0.0945,0,0}60.5 \\
$\bullet$ Tent$_{ICLR2021}$ &
  48.2 &
  49.8 &
  48.7 &
  67.7 &
  62.1 &
  70.8 &
  67.2 &
  61.8 &
  61.4 &
  33.7 &
  76.0 &
  51.2 &
  66.6 &
  69.6 &
  66.9 &
  \cellcolor[cmyk]{0.1012,0.0945,0,0}60.1 \\
$\bullet$ EATA$_{ICML2022}$ &
  48.7 &
  50.4 &
  49.6 &
  67.8 &
  63.2 &
  70.8 &
  67.5 &
  62.5 &
  62.5 &
  47.9 &
  76.1 &
  52.2 &
  66.9 &
  69.7 &
  67.4 &
  \cellcolor[cmyk]{0.1012,0.0945,0,0}61.5 \\
$\bullet$ SAR$_{ICLR2023}$ &
  48.5 &
  50.2 &
  49.2 &
  67.8 &
  63.8 &
  70.9 &
  67.9 &
  63.1 &
  62.7 &
  38.7 &
  76.1 &
  52.2 &
  67.1 &
  69.8 &
  67.4 &
  \cellcolor[cmyk]{0.1012,0.0945,0,0}61.0 \\
$\bullet$ DeYO$_{ICLR2024}$ &
  48.6 &
  50.2 &
  49.4 &
  67.9 &
  62.6 &
  70.9 &
  67.4 &
  62.5 &
  62.3 &
  40.4 &
  76.1 &
  52.2 &
  66.8 &
  69.8 &
  67.3 &
  \cellcolor[cmyk]{0.1012,0.0945,0,0}61.0 \\
$\bullet$ FOA$_{ICML2024}$ &
  49.2 &
  50.8 &
  49.7 &
  66.0 &
  65.5 &
  69.8 &
  67.4 &
  62.8 &
  65.7 &
  \underline{60.3} &
  74.9 &
  51.9 &
  \underline{69.5} &
  68.8 &
  68.0 &
  \cellcolor[cmyk]{0.1012,0.0945,0,0}62.7 \\
$\bullet$ READ$^\dag$$_{ICLR2024}$ &
  50.7 &
  52.2 &
  51.4 &
  67.9 &
  65.3 &
  71.1 &
  68.7 &
  64.0 &
  65.8 &
  56.3 &
  76.3 &
  53.6 &
  68.7 &
  70.0 &
  68.6 &
  \cellcolor[cmyk]{0.1012,0.0945,0,0}63.4 \\
$\bullet$ ABPEM$^\dag$$_{AAAI2025}$ &
  \underline{52.1} &
  \underline{53.1} &
  \underline{52.8} &
  \textbf{69.0} &
  \underline{65.6} &
  \underline{71.8} &
  \underline{68.8} &
  64.1 &
  65.7 &
  57.9 &
  \underline{76.6} &
  54.3 &
  69.2 &
  71.1 &
  \underline{69.2} &
  \cellcolor[cmyk]{0.1012,0.0945,0,0}\underline{64.1} \\
$\bullet$ SuMi$^\dag$$_{ICLR2025}$ &
  50.1 &
  50.7 &
  50.4 &
  \underline{68.2} &
  \underline{65.6} &
  \textbf{72.2} &
  \textbf{69.7} &
  \underline{65.7} &
  \textbf{67.0} &
  56.5 &
  \textbf{77.1} &
  \underline{55.2} &
  69.3 &
  \underline{71.2} &
  68.9 &
  \cellcolor[cmyk]{0.1012,0.0945,0,0}63.9 \\
\rowcolor[cmyk]{0.193,0,0.2222,0} 
$\bullet$ BriMPR$^\dag$ &
  \textbf{55.3} &
  \textbf{56.1} &
  \textbf{56.7} &
  67.8 &
  \textbf{67.9} &
  70.6 &
  \underline{68.8} &
  \textbf{65.9} &
  \underline{66.2} &
  \textbf{64.1} &
  76.2 &
  \textbf{56.3} &
  \textbf{72.0} &
  \textbf{73.7} &
  \textbf{70.5} &
  \textbf{65.9} \\ \midrule

Source &
  52.9 &
  53.0 &
  53.1 &
  57.2 &
  57.2 &
  58.5 &
  57.5 &
  56.5 &
  57.1 &
  55.6 &
  59.2 &
  53.7 &
  57.1 &
  56.4 &
  57.3 &
  \cellcolor[cmyk]{0.1012,0.0945,0,0}56.2 \\
$\bullet$ Tent$_{ICLR2021}$ &
  53.2 &
  53.3 &
  53.3 &
  56.8 &
  56.6 &
  57.9 &
  57.2 &
  55.9 &
  56.6 &
  56.5 &
  58.5 &
  53.9 &
  57.5 &
  56.8 &
  56.9 &
  \cellcolor[cmyk]{0.1012,0.0945,0,0}56.1 \\
$\bullet$ EATA$_{ICML2022}$ &
  53.4 &
  53.5 &
  53.5 &
  57.0 &
  57.0 &
  58.3 &
  57.7 &
  56.3 &
  57.0 &
  56.8 &
  59.1 &
  54.2 &
  57.9 &
  57.2 &
  57.2 &
  \cellcolor[cmyk]{0.1012,0.0945,0,0}56.4 \\
$\bullet$ SAR$_{ICLR2023}$ &
  53.3 &
  53.3 &
  53.3 &
  56.4 &
  56.5 &
  57.9 &
  57.3 &
  55.6 &
  56.4 &
  56.3 &
  58.8 &
  53.7 &
  57.8 &
  56.9 &
  57.0 &
  \cellcolor[cmyk]{0.1012,0.0945,0,0}56.0 \\
$\bullet$ DeYO$_{ICLR2024}$ &
  53.3 &
  53.4 &
  53.4 &
  56.7 &
  56.7 &
  58.0 &
  57.3 &
  56.0 &
  56.8 &
  56.4 &
  58.7 &
  53.9 &
  57.7 &
  57.0 &
  57.0 &
  \cellcolor[cmyk]{0.1012,0.0945,0,0}56.2 \\
$\bullet$ FOA$_{ICML2024}$ &
  52.7 &
  52.7 &
  52.7 &
  53.2 &
  53.6 &
  53.6 &
  53.8 &
  53.4 &
  53.4 &
  53.3 &
  55.6 &
  52.5 &
  55.3 &
  53.7 &
  54.4 &
  \cellcolor[cmyk]{0.1012,0.0945,0,0}53.6 \\
$\bullet$ READ$^\dag$$_{ICLR2024}$ &
  53.8 &
  54.0 &
  \underline{53.8} &
  \underline{58.0} &
  57.9 &
  \underline{59.2} &
  \underline{58.7} &
  57.1 &
  \textbf{58.2} &
  50.0 &
  \underline{60.0} &
  \textbf{55.2} &
  58.5 &
  \underline{57.7} &
  \underline{58.2} &
  \cellcolor[cmyk]{0.1012,0.0945,0,0}56.7 \\
$\bullet$ ABPEM$^\dag$$_{AAAI2025}$ &
  46.5 &
  46.7 &
  46.5 &
  54.2 &
  55.1 &
  56.4 &
  55.2 &
  51.3 &
  53.2 &
  52.1 &
  56.6 &
  52.1 &
  54.4 &
  51.7 &
  54.7 &
  \cellcolor[cmyk]{0.1012,0.0945,0,0}52.4 \\
$\bullet$ SuMi$^\dag$$_{ICLR2025}$ &
  \underline{54.0} &
  \underline{54.3} &
  \underline{53.8} &
  \textbf{58.2} &
  \underline{58.4} &
  \textbf{59.4} &
  \underline{58.7} &
  \underline{57.5} &
  \textbf{58.2} &
  \underline{57.6} &
  59.4 &
  \underline{54.8} &
  \underline{59.0} &
  57.5 &
  \underline{58.2} &
  \cellcolor[cmyk]{0.1012,0.0945,0,0}\underline{57.3} \\
\rowcolor[cmyk]{0.193,0,0.2222,0} 
$\bullet$ BriMPR$^\dag$ &
  \textbf{54.9} &
  \textbf{55.0} &
  \textbf{55.0} &
  57.9 &
  \textbf{58.5} &
  58.9 &
  \textbf{58.7} &
  \textbf{57.5} &
  \underline{58.0} &
  \textbf{58.5} &
  \textbf{60.3} &
  54.5 &
  \textbf{59.7} &
  \textbf{59.3} &
  \textbf{59.0} &
  \textbf{57.7} \\ \bottomrule
\end{tabular}
\caption{Comparison with SOTA methods on Kinetics50-C (top) and VGGSound-C (bottom) under the unimodal shift setting (severity level 5 of video corruption). $^\dag$Multimodal test-time adaptation methods.}
\label{tab:video}
\end{table*}

\begin{table*}[t]
\centering
\setlength{\tabcolsep}{1mm}
\small
\begin{tabular}{lcccccccccccccc}
\toprule
 &
  \multicolumn{3}{c}{Noise} &
  \multicolumn{3}{c}{Weather} &
   &
  \multicolumn{3}{c}{Noise} &
  \multicolumn{3}{c}{Weather} &
   \\ \cmidrule(lr){2-4} \cmidrule(lr){5-7} \cmidrule(lr){9-11} \cmidrule(lr){12-14}
Method &
  Gauss. &
  Traff. &
  Crowd &
  Rain &
  Thund. &
  Wind &
  \cellcolor[cmyk]{0.1012,0.0945,0,0}Avg. &
  Gauss. &
  Traff. &
  Crowd &
  Rain &
  Thund. &
  Wind &
  \cellcolor[cmyk]{0.1012,0.0945,0,0}Avg. \\ \midrule
Source &
  74.3 &
  65.3 &
  68.0 &
  70.3 &
  68.0 &
  70.5 &
  \cellcolor[cmyk]{0.1012,0.0945,0,0}69.4 &
  37.3 &
  21.2 &
  16.9 &
  21.8 &
  27.3 &
  25.7 &
  \cellcolor[cmyk]{0.1012,0.0945,0,0}25.0 \\
$\bullet$ Tent$_{ICLR2021}$ &
  74.6 &
  67.4 &
  69.5 &
  70.8 &
  67.6 &
  71.2 &
  \cellcolor[cmyk]{0.1012,0.0945,0,0}70.2 &
  10.8 &
  2.8 &
  1.8 &
  2.9 &
  5.6 &
  3.9 &
  \cellcolor[cmyk]{0.1012,0.0945,0,0}4.6 \\
$\bullet$ EATA$_{ICML2022}$ &
  74.6 &
  67.3 &
  69.4 &
  70.8 &
  69.8 &
  71.0 &
  \cellcolor[cmyk]{0.1012,0.0945,0,0}70.5 &
  \underline{40.2} &
  \underline{30.0} &
  27.8 &
  29.7 &
  36.5 &
  32.2 &
  \cellcolor[cmyk]{0.1012,0.0945,0,0}32.7 \\
$\bullet$ SAR$_{ICLR2023}$ &
  74.6 &
  67.0 &
  69.2 &
  70.9 &
  69.5 &
  70.9 &
  \cellcolor[cmyk]{0.1012,0.0945,0,0}70.3 &
  30.4 &
  5.5 &
  8.0 &
  9.3 &
  32.5 &
  17.2 &
  \cellcolor[cmyk]{0.1012,0.0945,0,0}17.1 \\
$\bullet$ DeYO$_{ICLR2024}$ &
  74.6 &
  67.0 &
  69.3 &
  70.8 &
  69.0 &
  71.0 &
  \cellcolor[cmyk]{0.1012,0.0945,0,0}70.3 &
  22.9 &
  4.9 &
  15.8 &
  4.9 &
  16.5 &
  20.0 &
  \cellcolor[cmyk]{0.1012,0.0945,0,0}14.2 \\
$\bullet$ FOA$_{ICML2024}$ &
  73.8 &
  \textbf{70.0} &
  70.5 &
  71.0 &
  \textbf{73.0} &
  71.2 &
  \cellcolor[cmyk]{0.1012,0.0945,0,0}71.6 &
  31.5 &
  26.2 &
  23.7 &
  31.0 &
  34.2 &
  26.7 &
  \cellcolor[cmyk]{0.1012,0.0945,0,0}28.9 \\
$\bullet$ READ$^\dag$$_{ICLR2024}$ &
  \underline{74.8} &
  69.2 &
  69.9 &
  71.4 &
  72.4 &
  71.0 &
  \cellcolor[cmyk]{0.1012,0.0945,0,0}71.5 &
  39.9 &
  29.4 &
  26.8 &
  30.8 &
  36.8 &
  30.7 &
  \cellcolor[cmyk]{0.1012,0.0945,0,0}32.4 \\
$\bullet$ ABPEM$^\dag$$_{AAAI2025}$ &
  74.7 &
  68.5 &
  70.3 &
  \textbf{71.7} &
  72.3 &
  71.2 &
  \cellcolor[cmyk]{0.1012,0.0945,0,0}71.4 &
  38.5 &
  27.6 &
  25.2 &
  26.5 &
  32.7 &
  26.5 &
  \cellcolor[cmyk]{0.1012,0.0945,0,0}29.5 \\
$\bullet$ SuMi$^\dag$$_{ICLR2025}$ &
  \textbf{75.1} &
  68.9 &
  \underline{70.6} &
  \underline{71.6} &
  \underline{72.8} &
  \textbf{72.1} &
  \cellcolor[cmyk]{0.1012,0.0945,0,0}\underline{71.9} &
  \textbf{41.9} &
  26.3 &
  \underline{27.9} &
  \underline{31.6} &
  \underline{37.1} &
  \underline{34.1} &
  \cellcolor[cmyk]{0.1012,0.0945,0,0}\underline{33.2} \\
\rowcolor[cmyk]{0.193,0,0.2222,0} 
$\bullet$ BriMPR$^\dag$ &
  \underline{74.8} &
  \underline{69.6} &
  \textbf{71.7} &
  71.5 &
  72.4 &
  \underline{72.0} &
  \textbf{72.0} &
  39.3 &
  \textbf{35.0} &
  \textbf{36.7} &
  \textbf{32.5} &
  \textbf{41.0} &
  \textbf{34.6} &
  \textbf{36.5} \\ \bottomrule
\end{tabular}
\caption{Comparison with SOTA methods on Kinetics50-C (left) and VGGSound-C (right) under the unimodal shift setting (severity level 5 of audio corruption).}
\label{tab:audio}
\end{table*}

\subsection{Inter-modal Interaction Enhancement for Alignment Refinement}
\label{IIEAR}
After initial cross-modal semantic alignment via unimodal feature calibration, we further improve the quality of alignment by inter-modal interactions. By recombining masked and complete modalities, the unmasked low-quality modality is forced to draw multimodal information from credible pseudo-labels. Meanwhile, inter-modal instance-wise contrastive learning is applied to strengthen the alignment across instances.

\subsubsection{Cross-modal Masked Embedding Recombination.}
\label{CMER}
Masked language modeling~\cite{BERT} and masked image modeling~\cite{MAE} force model to reconstruct the masked regions by utilizing contextual clues and have been widely used as powerful self-supervised learning paradigms in natural language processing and computer vision tasks, respectively. 
Related but distinct, our proposed Cross-modal Masked Embedding Recombination (CMER) uses masking to simulate distribution shifts from missing patches, serving as a form of data augmentation.

For input $\mathbf{x}^u$, we randomly mask a portion (e.g., 50\%) of its patches and encode the unmasked part $\mathbf{x}^{u_m}$ using $\Phi^u$ with modality-specific prompts $\boldsymbol{P}^u$ to obtain the masked embedding $\Phi^u(\mathbf{x}^{u_m})$. Then, $\Phi^u(\mathbf{x}^{u_m})$ is recombined with complete embeddings from other modalities and passed to the joint module, generating an augmented representation that simulates unimodal corruption. Taking the masked audio modality as an example, the recombined representations and their predictions are formulated as:
\begin{equation}
\label{augpred}
\begin{aligned}
\boldsymbol{Z}^{a_m v} &= \Psi([\Phi^a(\mathbf{x}^{a_m});\Phi^v(\mathbf{x}^{v})]), \\
\boldsymbol{y}^{a_m v} &= \sigma(h(\text{MealPool}(\boldsymbol{Z}^{a_m v}))),
\end{aligned}
\end{equation}
where $\sigma$ denotes the softmax function. With the initial alignment from PMGFA, we can utilize the complete multimodal data to provide reliable pseudo-labels for augmented inputs. 
As pseudo-labels become more reliable in the later stages of adaptation, we further calibrate them via temperature scaling~\cite{hinton2015distillingknowledgeneuralnetwork, temperature-scaling}:
\begin{equation}
\label{pseudolabel}
\hat{\boldsymbol{y}}^{a v}_k = \frac{\exp([h(\text{MealPool}(\boldsymbol{Z}^{a v}))]_k / \text{AdaTp})}{\sum_{k^\prime=1}^C \exp([h(\text{MealPool}(\boldsymbol{Z}^{a v}))]_{k^\prime} / \text{AdaTp})}.
\end{equation}
Here, $k$ and $k^{\prime}$ denote the $k$-th and $k^{\prime}$-th elements of the tensor, and $C$ represents the number of classes. $\text{AdaTp} = 1 + \tau_{0} / (1 + \exp(D_0 - \operatorname{Disc}^{J})) \in (1, 1+\tau_{0})$ is the adaptive temperature coefficient, where $\operatorname{Disc}^{J}$ is the distribution discrepancy calculated for the joint module, and $\tau_{0}$ and $D_0$ are predefined hyperparameters. 
When $\operatorname{Disc}^{J}$ is large, $\text{AdaTp}$ approaches $1+\tau_{0}$ to alleviate overconfident predictions. As $\operatorname{Disc}^{J}$ decreases, $\text{AdaTp}$ approaches 1, and Eq.~\eqref{pseudolabel} approximates the vanilla softmax function. Subsequently, minimize the cross-entropy between the calibrated pseudo-label and the augmented predictions:
\begin{equation}
\begin{aligned}
&\mathcal{L}_\text{CMER} = \lambda^{a} \mathcal{L}_{a_mv} + \lambda^{v} \mathcal{L}_{av_m} \\
&= - \lambda^{a} \sum_{k=1}^C \hat{\boldsymbol{y}}^{av}_k\log \boldsymbol{y}^{a_mv}_k - \lambda^{v} \sum_{k=1}^C \hat{\boldsymbol{y}}^{av}_k\log \boldsymbol{y}^{av_m}_k,
\end{aligned}
\end{equation}
where $\lambda^u = 1 - \text{Disc}^u / (\text{Disc}^a + \text{Disc}^v)$ $(u \in\{a, v\})$ is the weight of the corresponding term, assigning a higher weight to the augmentation with a milder distribution shift in the masked modality. Intuitively, $\mathcal{L}_\text{CMER}$ deliberately discards high-quality modality information, forcing the corrupted modality to independently derive the correct result.

\subsubsection{Inter-modal Instance-wise Contrastive Learning.}
\label{IICL}
Contrastive learning~\cite{MoCo, SimCLR} has emerged as a key paradigm in cross-modal representation learning, aiming to improve the quality of representations by aligning the feature spaces of the same semantic instance across different modalities/views. Building upon the calibration of unimodal feature distributions, BriMPR introduces inter-modal instance-wise contrastive learning. For data $\mathbf{x}^u$ ($u\in\{a, v\}$), its unimodal representation is as follows: 
\begin{equation}
\label{unifeature}
\boldsymbol{Z}^{u} = \Psi(\Phi^u(\mathbf{x}^{u})).
\end{equation}
Subsequently, different unimodal representations of the same instance are regarded as positive pairs, while the others as negative pairs. The contrastive loss is defined as:
\begin{equation}
\label{iicl}
{\cal L}_{\mathrm{IICL}}=-\frac{1} {2B} \sum_{j=1}^{B}\sum_{u_1 \ne u_2}
\log \frac{e^{\operatorname{sim} ( \boldsymbol{Z}^{u_1}_{j}, \boldsymbol{Z}^{u_2}_{j} ) /\tau}} 
{\sum_{j^\prime = 1}^B e^{\operatorname{sim} ( \boldsymbol{Z}^{u_1}_{j}, \boldsymbol{Z}^{u_2}_{j^\prime} ) /\tau}},
\end{equation}
where $\operatorname{sim}(\cdot, \cdot)$ denotes the cosine similarity function, and $\tau$ denotes the temperature hyperparameter.

\subsection{Overall Procedure}
\label{Overall Procedure}
To brief, BriMPR optimizes the added modality-specific prompts by minimizing the following loss:
\begin{equation}
\label{overallloss}
\cal{L}_{\text{BriMPR}} = \cal{L}_{\text{PMGFA}} + \cal{L}_{\text{CMER}} + \cal{L}_{\text{IICL}}.
\end{equation}

\begin{table*}[t]
\centering
\setlength{\tabcolsep}{0.99mm}
\small
\begin{tabular}{lcccccccccccccccc}
\toprule
 &
  \multicolumn{3}{c}{Noise} &
  \multicolumn{4}{c}{Blur} &
  \multicolumn{4}{c}{Weather} &
  \multicolumn{4}{c}{Digital} &
   \\ \cmidrule(lr){2-4}\cmidrule(lr){5-8}\cmidrule(lr){9-12}\cmidrule(lr){13-16}
Method &
  Gauss. &
  Shot &
  Impul. &
  Defoc. &
  Glass &
  Motion &
  Zoom &
  Snow &
  Frost &
  Fog &
  Bright. &
  Contr. &
  Elast. &
  Pixel. &
  Jpeg &
  \cellcolor[cmyk]{0.1012,0.0945,0,0}Avg. \\ \midrule
Source &
  13.1 &
  14.1 &
  13.3 &
  37.2 &
  37.4 &
  45.3 &
  41.8 &
  29.4 &
  32.6 &
  20.4 &
  55.2 &
  18.3 &
  42.5 &
  38.8 &
  37.8 &
  \cellcolor[cmyk]{0.1012,0.0945,0,0}31.8 \\
$\bullet$ Tent$_{ICLR2021}$ &
  9.1 &
  9.7 &
  9.1 &
  32.5 &
  34.1 &
  43.5 &
  40.2 &
  23.2 &
  28.3 &
  13.2 &
  55.1 &
  13.7 &
  40.7 &
  34.7 &
  35.0 &
  \cellcolor[cmyk]{0.1012,0.0945,0,0}28.1 \\
$\bullet$ EATA$_{ICML2022}$ &
  12.9 &
  14.0 &
  13.1 &
  38.1 &
  38.7 &
  46.9 &
  43.1 &
  30.6 &
  33.0 &
  20.2 &
  56.5 &
  18.2 &
  43.7 &
  40.7 &
  39.0 &
  \cellcolor[cmyk]{0.1012,0.0945,0,0}32.6 \\
$\bullet$ SAR$_{ICLR2023}$ &
  11.8 &
  12.8 &
  11.9 &
  37.4 &
  38.2 &
  46.3 &
  43.1 &
  29.8 &
  33.0 &
  17.4 &
  56.0 &
  16.0 &
  43.5 &
  39.5 &
  38.2 &
  \cellcolor[cmyk]{0.1012,0.0945,0,0}31.7 \\
$\bullet$ DeYO$_{ICLR2024}$ &
  11.0 &
  12.0 &
  11.1 &
  37.0 &
  37.7 &
  46.3 &
  43.2 &
  29.9 &
  33.3 &
  17.9 &
  56.2 &
  17.0 &
  43.7 &
  39.7 &
  38.0 &
  \cellcolor[cmyk]{0.1012,0.0945,0,0}31.6 \\
$\bullet$ FOA$_{ICML2024}$ &
  18.7 &
  20.6 &
  19.3 &
  43.7 &
  \textbf{45.5} &
  50.2 &
  \underline{47.9} &
  \textbf{38.9} &
  \textbf{43.7} &
  \textbf{37.2} &
  \textbf{60.5} &
  23.5 &
  \underline{52.7} &
  \underline{48.9} &
  \underline{47.4} &
  \cellcolor[cmyk]{0.1012,0.0945,0,0}\underline{39.9} \\
$\bullet$ READ$^\dag$$_{ICLR2024}$ &
  14.5 &
  14.9 &
  14.8 &
  \underline{43.8} &
  42.1 &
  \underline{51.0} &
  46.5 &
  35.4 &
  38.9 &
  27.6 &
  58.9 &
  22.6 &
  47.1 &
  42.1 &
  38.1 &
  \cellcolor[cmyk]{0.1012,0.0945,0,0}35.9 \\
$\bullet$ ABPEM$^\dag$$_{AAAI2025}$ &
  \underline{19.2} &
  \underline{20.7} &
  \underline{19.7} &
  \textbf{46.2} &
  44.2 &
  \textbf{51.9} &
  \underline{47.9} &
  \underline{38.1} &
  \underline{41.1} &
  32.6 &
  \underline{59.9} &
  \underline{25.3} &
  49.4 & 
  48.8 &
  45.6 &
  \cellcolor[cmyk]{0.1012,0.0945,0,0}39.4 \\
$\bullet$ SuMi$^\dag$$_{ICLR2025}$ &
  12.5 &
  13.6 &
  12.6 &
  37.0 &
  37.9 &
  45.9 &
  42.3 &
  29.3 &
  32.7 &
  19.7 &
  55.7 &
  17.8 &
  42.7 &
  38.3 &
  36.9 &
  \cellcolor[cmyk]{0.1012,0.0945,0,0}31.7 \\
\rowcolor[cmyk]{0.193,0,0.2222,0} 
$\bullet$ BriMPR$^\dag$ &
  \textbf{22.9} &
  \textbf{24.2} &
  \textbf{24.1} &
  43.6 &
  \underline{45.4} &
  49.5 &
  \textbf{48.2} &
  38.0 &
  40.8 &
  \underline{36.8} &
  59.8 &
  \textbf{27.1} &
  \textbf{52.8} &
  \textbf{52.7} &
  \textbf{47.9} &
  \textbf{40.9} \\ \bottomrule
\end{tabular}
\caption{Comparison with SOTA methods on Kinetics50-C under the multimodal shift setting (severity level 5).}
\label{tab:ks50_both}
\end{table*}
\begin{table}[t]
\centering
\setlength{\tabcolsep}{1mm}
\small
\begin{tabular}{lccccccc}
\toprule
& \multicolumn{3}{c}{Noise} & \multicolumn{3}{c}{Weather} & \\ \cmidrule(lr){2-4} \cmidrule(lr){5-7}

Method & Gauss. & Traff. & Crowd & Rain & Thund. & Wind & \cellcolor[cmyk]{0.1012,0.0945,0,0}Avg. \\ \midrule

Source & 17.1 & 6.4 & 5.4 & 6.0 & 13.5 & 8.8 & \cellcolor[cmyk]{0.1012,0.0945,0,0}9.5 \\
$\bullet$ Tent & 3.2 & 0.9 & 0.8 & 0.9 & 2.8 & 1.3 & \cellcolor[cmyk]{0.1012,0.0945,0,0}1.6 \\
$\bullet$ EATA & 21.5 & 7.7 & 7.1 & 7.3 & 17.3 & 11.9 & \cellcolor[cmyk]{0.1012,0.0945,0,0}12.1 \\
$\bullet$ SAR & 10.7 & 1.8 & 1.6 & 2.3 & 12.8 & 3.1 & \cellcolor[cmyk]{0.1012,0.0945,0,0}5.4 \\
$\bullet$ DeYO & 6.7 & 1.2 & 1.3 & 1.3 & 9.3 & 2.9 & \cellcolor[cmyk]{0.1012,0.0945,0,0}3.8 \\
$\bullet$ FOA & 18.8 & 10.8 & 11.4 & \underline{11.6} & 20.5 & 10.4 & \cellcolor[cmyk]{0.1012,0.0945,0,0}13.9 \\
$\bullet$ READ$^\dag$ & 20.1 & 12.5 & 10.7 & 10.5 & \underline{20.5} & \underline{13.4} & \cellcolor[cmyk]{0.1012,0.0945,0,0}14.6 \\
$\bullet$ ABPEM$^\dag$ & \underline{21.9} & \underline{13.4} & \underline{12.3} & 10.9 & 20.4 & 12.4 & \cellcolor[cmyk]{0.1012,0.0945,0,0}\underline{15.2} \\
$\bullet$ SuMi$^\dag$ & 17.0 & 6.8 & 5.7 & 6.2 & 13.4 & 8.8 & \cellcolor[cmyk]{0.1012,0.0945,0,0}9.7 \\

\rowcolor[cmyk]{0.193,0,0.2222,0} 
$\bullet$ BriMPR$^\dag$ & \textbf{23.5} & \textbf{18.8} & \textbf{21.4} & \textbf{15.8} & \textbf{26.8} & \textbf{18.3} & \textbf{20.7} \\ \bottomrule
\end{tabular}
\caption{Comparison with SOTA methods on VGGSound-C under the multimodal shift setting (severity level 5).}
\label{tab:vgg_both}
\end{table}
\begin{table}[t]
\centering
\small
\begin{tabular}{lcccc}
\toprule
 & \multicolumn{2}{c}{MOSI $\to$ SIMS} & \multicolumn{2}{c}{SIMS $\to$ MOSI} \\ \cmidrule(lr){2-3} \cmidrule(lr){4-5}
Method & ACC$\uparrow$ & F1$\uparrow$ & ACC$\uparrow$ & F1$\uparrow$ \\ \midrule
Source & 46.0 & 45.6 & 59.0 & 73.6 \\
$\bullet$ Tent & 38.1 & 42.2 & 59.6 & 74.5 \\
$\bullet$ READ$^\dag$ & 32.4 & 44.5 & 59.7 & 74.7 \\
$\bullet$ SuMi$^\dag$ & 44.4 & 45.0 & 59.4 & 74.2 \\
\rowcolor[cmyk]{0.193,0,0.2222,0} 
$\bullet$ BriMPR$^\dag$ & \textbf{58.2} & \textbf{57.6} & \textbf{59.9} & \textbf{74.9} \\ \bottomrule
\end{tabular}
\caption{Comparison with SOTA methods on real-world shift datasets.}
\label{tab:mosi_sims}
\end{table}

\section{Experiments}
\label{Experiments}

\subsection{Experimental Setups}
\label{experimental_setups}
\subsubsection{Datasets and models.}
We evaluate our method on four commonly used multimodal datasets, including Kinetics50-C, VGGSound-C~\cite{READ}, CMU-MOSI~\cite{MOSI}, and CH-SIMS~\cite{SIMS}. 
Kinetics50-C/VGGSound-C contain two modalities: video and audio, and are obtained by adding various corruptions to the test sets of the original versions (i.e., Kinetics~\cite{ks} and VGGSound~\cite{vgg}). For the video modality and the audio modality, 15 and 6 types of corruption are introduced, respectively, which are divided into 5 severity levels. Following~\cite{READ}, we use the pre-trained CAV-MAE~\cite{cavmae} as the source model.
CMU-MOSI/CH-SIMS contain three modalities: text, video, and audio. Following~\cite{SuMi}, we use stacked Transformer blocks as the backbone and pre-train the model on MOSI and SIMS, respectively.

\subsubsection{Considered settings.}
For domain shifts caused by corruptions, we consider two tasks and report average classification accuracy (\%): (1) Under the unimodal shift setting, following~\cite{READ}, one modality is corrupted while the other modality remains clean; (2) Under the multimodal shift setting, both modalities are corrupted. For real-world domain shifts, we consider the settings of MOSI → SIMS and SIMS → MOSI, and report accuracy (ACC) and F1 score (F1).

\subsubsection{Baselines.}
We compare the proposed method with multiple baselines including Source (source pre-trained model), Tent~\cite{Tent}, EATA~\cite{EATA}, SAR~\cite{SAR}, DeYO~\cite{DeYO}, FOA~\cite{FOA}, READ~\cite{READ}, ABPEM~\cite{ABPEM} and SuMi~\cite{SuMi}.

\subsubsection{Implementation details.}
For all experiments, we use an Adam optimizer with a learning rate of 1e-4 and batch size of 64. The default number of prompts per layer $m_p$ is set to 10 and the prompts are randomly initialized~\cite{VPT}. The mask ratio is set to 0.5. $\tau_0$ and $D_0$ of the adaptive temperature coefficient $\text{AdaTp}$ are set to 0.2 and 5 respectively. $\tau$ in Eq.~\eqref{iicl} is set to 0.07/0.25 for the unimodal and multimodal corruption settings respectively. 
For the hyperparameters of the compared methods, we adopt the recommended values from the respective papers. All the experiments are conducted with 3 random seeds on RTX-3090 GPUs.

\begin{table*}[t]
\centering
\setlength{\tabcolsep}{1mm}
\small
\begin{tabular}{lcccccc}
\toprule
& \multicolumn{3}{c}{Kinetics50-C} & \multicolumn{3}{c}{VGGSound-C} \\ \cmidrule(l){2-4} \cmidrule(l){5-7}
\multicolumn{1}{c}{Method} & audio & video & both & audio & video & both \\ \midrule
BriMPR w/o ${\cal L}_{\mathrm{CMER}}$ & 71.4 & 65.6 & 40.7 & 35.3 & 57.6 & 20.2 \\
$\bullet$ BriMPR ($\lambda^a \leftrightarrow \lambda^v$) & 70.0 \small{{\color[cmyk]{0,0.9444,0.9817,0} (-1.4)}} & 65.2 \small{{\color[cmyk]{0,0.9444,0.9817,0} (-0.4)}} & 39.9 \small{{\color[cmyk]{0,0.9444,0.9817,0} (-0.8)}} & 32.1 \small{{\color[cmyk]{0,0.9444,0.9817,0} (-3.2)}} & 56.5 \small{{\color[cmyk]{0,0.9444,0.9817,0} (-1.1)}} & 19.5 \small{{\color[cmyk]{0,0.9444,0.9817,0} (-0.7)}} \\
$\bullet$ BriMPR & 72.0 \small{{\color[cmyk]{0.6323,0,0.964,0} (+0.6)}} & 65.9 \small{{\color[cmyk]{0.6323,0,0.964,0} (+0.3)}} & 40.9 \small{{\color[cmyk]{0.6323,0,0.964,0} (+0.2)}} & 36.5 \small{{\color[cmyk]{0.6323,0,0.964,0} (+1.2)}} & 57.7 \small{{\color[cmyk]{0.6323,0,0.964,0} (+0.1)}} & 20.7 \small{{\color[cmyk]{0.6323,0,0.964,0} (+0.5)}} \\ \bottomrule
\end{tabular}
\caption{Verify the effect of CMER from the perspective of weights.}
\label{tab:lambda_CMER}
\end{table*}
\begin{table}[t]
\centering
\setlength{\tabcolsep}{1mm}
\small
\begin{tabular}{lcccccc}
\toprule
& \multicolumn{3}{c}{Kinetics50-C} & \multicolumn{3}{c}{VGGSound-C} \\ \cmidrule(l){2-4} \cmidrule(l){5-7}
\multicolumn{1}{c}{Method} & audio & video & both & audio & video & both \\ \midrule

Source & 69.4 & 60.5 & 31.8 & 25.0 & 56.2 & 9.5 \\
${\cal L}_\text{KL}$ & 69.3 & 60.4 & 31.5 & 24.8 & 55.7 & 9.1 \\
${\cal L}_{\text{moment}_2}$ & 69.9 & 61.5 & 34.5 & 25.2 & 48.9 & 12.1 \\
${\cal L}_{\text{moment}_1}$ & 71.3 & 63.5 & 37.4 & 32.0 & 54.7 & 16.4 \\ \midrule
($\textbf{A}$) ${\cal L}_{\mathrm{PMGFA}}$ & 71.1 & 64.7 & 40.5 & 35.1 & 57.5 & 20.1 \\
($\textbf{B}$) + ${\cal L}_{\mathrm{IICL}}$ & 71.4 & 65.6 & 40.7 & 35.3 & 57.6 & 20.2 \\
\rowcolor[cmyk]{0.193,0,0.2222,0} 
($\textbf{C}$) + ${\cal L}_{\mathrm{CMER}}$ & \textbf{72.0} & \textbf{65.9} & \textbf{40.9} & \textbf{36.5} & \textbf{57.7} & \textbf{20.7} \\ \bottomrule
\end{tabular}
\caption{Ablation studies for different components of BriMPR. ${\cal L}_\text{KL}$, ${\cal L}_{\text{moment}_2}$ and ${\cal L}_{\text{moment}_1}$ respectively denote replacing ${\cal L}_\text{PMFGA}$ with the KL-divergence, moment matching, and moment matching in a non-squared form.}
\label{tab:ablation}
\end{table}

\subsection{Performance Comparison}
\subsubsection{Results of the unimodal shift setting.}
In Tab.~\ref{tab:video} and Tab.~\ref{tab:audio}, we present the results of the unimodal shift setting on Kinetics50-C and VGGSound-C with audio corruption and video corruption, respectively. Our proposed method BriMPR consistently improves the source model and outperforms all other competing methods. 
Notably, in scenarios where the dominant modality of the dataset is corrupted (for Kinetics50-C, video is the dominant modality; for VGGSound-C, audio is the dominant modality), BriMPR yields significant performance gains (60.5\% $\to$ 65.9\% on Kinetics50-C; 25.0\% $\to$ 36.5\% on VGGSound-C). 

\subsubsection{Results of the multimodal shift setting.}
Tab.~\ref{tab:ks50_both} and Tab.~\ref{tab:vgg_both} respectively present the results of the challenging multimodal shift setting on Kinetics50-C and VGGSound-C. Taking the ``Gauss.'' column in Tab.~\ref{tab:ks50_both} as an example, the reported value denotes the average classification accuracy (\%) across all 6 types of audio corruption, given the presence of Gaussian corruption in the video modality. Most methods suffer significant performance drops under this setting, whereas our BriMPR achieves the best results on most domains by decoupling MMTTA into unimodal alignment sub-problems, thereby reducing the dependence on high-quality modalities.

\subsubsection{Results of the real-world shift setting.}
Tab.~\ref{tab:mosi_sims} presents the results from the MOSI/SIMS datasets using text, video, and audio modalities. BriMPR exhibits strong robustness to real-world shifts. Notably, only BriMPR achieves results better than random guess ($>50\%$) on the MOSI $\to$ SIMS task, thanks to its modulation of the target feature space.

\subsection{Ablation Studies}
\subsubsection{Scrutinize CMER from the perspective of the weight $\lambda^u$.}
To illustrate how multimodal test-time adaptation benefits from CMER, we swap the weights $\lambda^u$ ($u\in\{a,v\}$) in ${\cal L}_{\mathrm{CMER}}$, assigning lower weight to the augmentation with a milder distribution shift in the masked modality. As reported in Tab.~\ref{tab:lambda_CMER}, the mismatched weights lead to significant performance drops. Taking the case of audio corruption as an example (where $\lambda^{v} > \lambda^{a}$), the performance degradation can be attributed to two main factors: (1) For $\lambda^{a} \mathcal{L}_{av_m}$, the small $\lambda^{a}$ suppresses the extraction of multimodal information by the complete but low-quality audio modality; (2) For $\lambda^{v} \mathcal{L}_{a_mv}$, providing pseudo-labels to the augmentation with the masked audio modality introduces more error information into the unmasked high-quality video modality. 

\subsubsection{Component analysis.}
As shown in Tab.~\ref{tab:ablation}, we conducted an ablation study on the components of BriMPR. First, we verify the effectiveness of ${\cal L}_{\mathrm{PMGFA}}$ (\textbf{A}): compared with KL-divergence (Row 2) and moment matching (Row 3), ${\cal L}_{\mathrm{PMGFA}}$ demonstrates a significant advantage, as it eliminates the off-diagonal elements in the covariance matrix, reducing the estimation error. When moment matching is modified to a non-squared form (Row 4), performance improves in most cases, as the squared norm also amplifies the error. Subsequently, combining ${\cal L}_{\mathrm{PMGFA}}$ (\textbf{A}), which serves as the initial alignment objective, with inter-modal instance-wise contrastive learning ${\cal L}_{\mathrm{IICL}}$ (\textbf{B}) and cross-modal masked embedding recombination ${\cal L}_{\mathrm{CMER}}$ (\textbf{C}) for alignment refinement, leads to further performance gains across all tasks.

\section{Conclusion}
In this paper, we introduce BriMPR, a novel MMTTA method which tackles the coupling effect of unimodal feature shift and cross-modal semantic misalignment in a divide-and-conquer manner. Specifically, benefiting from the well-aligned source feature space, we first calibrate each unimodal global feature distribution via modality-specific prompts to achieve initial cross-modal semantic alignment. We then introduce a novel Cross-modal Masked Embedding Recombination strategy to facilitate the integration of multimodal information into low-quality modalities, and further refine the alignment via Inter-modal Instance-wise Contrastive Learning. Extensive experiments conducted on MMTTA benchmark, which includes corruption datasets and real-world shift datasets, demonstrate the superiority of BriMPR over the SOTA methods.

\section{Acknowledgments}
This work was supported by the Fundamental Research Funds for the Central Universities (No. 2025JBZX059), the Natural Science Foundation of Hebei Province (No. F2025105018), the Tangshan Municipal Science and Technology Plan Project (No.23130225E) and the Beijing Natural Science Foundation (No.4242046).

\bibliography{aaai2026}

@String(CVPR = {Proceedings of the IEEE/CVF Conference on Computer Vision and Pattern Recognition (CVPR)})

@String(ICCV = {Proceedings of the IEEE/CVF International Conference on Computer Vision (ICCV)})

@String(ICML = {International Conference on Machine Learning})

@String(ICLR = {International Conference on Learning Representations})

@String(NIPS = {Advances in Neural Information Processing Systems})

@String(AAAI = {Proceedings of the AAAI Conference on Artificial Intelligence})

@String(ECCV = {European Conference on Computer Vision})

@String(IJCV = {International Journal of Computer Vision})

@inproceedings{CoTTA,
  author = {Wang, Qin and Fink, Olga and Van Gool, Luc and Dai, Dengxin},
  month = {June},
  year = 2022,
  title = {Continual Test-Time Domain Adaptation},
  booktitle = CVPR,
  pages = {7201-7211}
}

@inproceedings{RMT,
  author  = {D\"obler, Mario and Marsden, Robert A. and Yang, Bin},
  month = {June},
  year = 2023,
  title = {Robust Mean Teacher for Continual and Gradual Test-Time Adaptation},
  booktitle = CVPR,
  pages = {7704-7714}
}

@InProceedings{TTT,
  author  = {Sun, Yu and Wang, Xiaolong and Liu, Zhuang and Miller, John and Efros, Alexei and Hardt, Moritz},
  year    = 2020,
  title   = {Test-Time Training with Self-Supervision for Generalization under Distribution Shifts},
  booktitle = ICML,
  pages   = {9229-9248},
  volume = {119},
  series = {Proceedings of Machine Learning Research},
  month = {13--18 Jul},
  publisher = {PMLR}
}

@InProceedings{EATA,
  author  = {Niu, Shuaicheng and Wu, Jiaxiang and Zhang, Yifan and Chen, Yaofo and Zheng, Shijian and Zhao, Peilin and Tan, Mingkui},
  year    = 2022,
  title   = {Efficient Test-Time Model Adaptation without Forgetting},
  booktitle = ICML,
  pages   = {16888-16905},
  volume = {162},
  series = {Proceedings of Machine Learning Research},
  month = {17--23 Jul},
  publisher = {PMLR}
}

@InProceedings{DAN,
  author = {Long, Mingsheng and Cao, Yue and Wang, Jianmin and Jordan, Michael},
  year = 2015,
  title = {Learning Transferable Features with Deep Adaptation Networks},
  booktitle = ICML,
  pages = {97-105},
  volume = {37},
  series = {Proceedings of Machine Learning Research},
  address = {Lille, France},
  month = {07--09 Jul},
  publisher = {PMLR}
}

@inproceedings{Tent,
  author = {Dequan Wang and Evan Shelhamer and Shaoteng Liu and Bruno Olshausen and Trevor Darrell},
  year = 2021,
  title = {Tent: Fully Test-Time Adaptation by Entropy Minimization},
  booktitle = ICLR
}

@InProceedings{AdaBN,
  author = {Yanghao Li and Naiyan Wang and Jianping Shi and Jiaying Liu and Xiaodi Hou},
  year = 2017,
  title = {Revisiting Batch Normalization For Practical Domain Adaptation},
  booktitle = ICLR
}

@inproceedings{SAR,
  author = {Shuaicheng Niu and Jiaxiang Wu and Yifan Zhang and Zhiquan Wen and Yaofo Chen and Peilin Zhao and Mingkui Tan},
  year = 2023,
  title = {Towards Stable Test-time Adaptation in Dynamic Wild World},
  booktitle = ICLR
}

@inproceedings{MixStyle,
  author = {Kaiyang Zhou and Yongxin Yang and Yu Qiao and Tao Xiang},
  year = 2021,
  title = {Domain Generalization with MixStyle},
  booktitle = ICLR
}

@inproceedings{READ,
  author = {Mouxing Yang and Yunfan Li and Changqing Zhang and Peng Hu and Xi Peng},
  year = 2024,
  title = {Test-time Adaptation against Multi-modal Reliability Bias},
  booktitle = ICLR
}

@inproceedings{SuMi,
  author = {Zirun Guo and Tao Jin},
  year = 2025,
  title = {Smoothing the Shift: Towards Stable Test-Time Adaptation under Complex Multimodal Noises},
  booktitle = ICLR
}

@inproceedings{AEO,
  title={Towards Robust Multimodal Open-set Test-time Adaptation via Adaptive Entropy-aware Optimization},
  author={Hao Dong and Eleni Chatzi and Olga Fink},
  booktitle=ICLR,
  year={2025}
}

@inproceedings{MEMO,
  author = {Zhang, Marvin and Levine, Sergey and Finn, Chelsea},
  year = 2022,
  title = {MEMO: Test Time Robustness via Adaptation and  Augmentation},
  booktitle = NIPS,
  publisher = {Curran Associates, Inc.},
  volume = {35},
  pages = "38629-38642"
}

@inproceedings{VDP,
  author = {Gan, Yulu and Bai, Yan and Lou, Yihang and Ma, Xianzheng and Zhang, Renrui and Shi, Nian and Luo, Lin},
  month = {Jun.},
  year = 2023,
  title = {Decorate the Newcomers: Visual Domain Prompt for Continual Test Time Adaptation},
  booktitle = AAAI, 
  volume = {37},
  number = {6},
  pages = "7595-7603"
}

@inproceedings{MLDG,
  author = {Li, Da and Yang, Yongxin and Song, Yi-Zhe and Hospedales, Timothy},
  month = {Apr.},
  year = 2018,
  title = {Learning to Generalize: Meta-Learning for Domain Generalization},
  booktitle = AAAI,
  volume = {32},
  number = {1}
}

@inproceedings{ABPEM, 
  author={Zhao, Yusheng and Luo, Junyu and Luo, Xiao and Huang, Jinsheng and Yuan, Jingyang and Xiao, Zhiping and Zhang, Ming},
  month={Apr.}, 
  year=2025, 
  title={Attention Bootstrapping for Multi-Modal Test-Time Adaptation},
  booktitle = AAAI,
  volume={39}, 
  number={21}, 
  pages={22849-22857},
  url={https://ojs.aaai.org/index.php/AAAI/article/view/34446}, DOI={10.1609/aaai.v39i21.34446}
}

@InProceedings{MCC,
  author = {Jin, Ying and Wang, Ximei and Long, Mingsheng and Wang, Jianmin},
  year = 2020,
  title = {Minimum Class Confusion for Versatile Domain Adaptation},
  booktitle = ECCV,
  publisher= "Springer International Publishing",
  address = "Cham",
  pages = {464–480}
}

@misc{DDC,
  title={Deep Domain Confusion: Maximizing for Domain Invariance}, 
  author={Eric Tzeng and Judy Hoffman and Ning Zhang and Kate Saenko and Trevor Darrell},
  year={2014},
  eprint={1412.3474},
  archivePrefix={arXiv},
  primaryClass={cs.CV},
  url={https://arxiv.org/abs/1412.3474}
}

@inproceedings{prefixtuning,
  title = "Prefix-Tuning: Optimizing Continuous Prompts for Generation",
  author = "Li, Xiang Lisa and Liang, Percy",
  booktitle = "Proceedings of the 59th Annual Meeting of the Association for Computational Linguistics and the 11th International Joint Conference on Natural Language Processing (Volume 1: Long Papers)",
  month = aug,
  year = "2021",
  address = "Online",
  publisher = "Association for Computational Linguistics",
  pages = "4582-4597"
}

@inproceedings{prompttuning,
  title = "The Power of Scale for Parameter-Efficient Prompt Tuning",
  author = "Lester, Brian and Al-Rfou, Rami and Constant, Noah",
  booktitle = "Proceedings of the 2021 Conference on Empirical Methods in Natural Language Processing",
  month = nov,
  year = 2021,
  address = "Online and Punta Cana, Dominican Republic",
  publisher = "Association for Computational Linguistics",
  pages = "3045-3059"
}

@InProceedings{VPT,
  author = "Jia, Menglin and Tang, Luming and Chen, Bor-Chun and Cardie, Claire and Belongie, Serge and Hariharan, Bharath and Lim, Ser-Nam",
  title = "Visual Prompt Tuning",
  booktitle = ECCV,
  year = "2022",
  publisher = "Springer Nature Switzerland",
  address = "Cham",
  pages = "709-727"
}

@inproceedings{ptuning,
  title = "{P}-Tuning: Prompt Tuning Can Be Comparable to Fine-tuning Across Scales and Tasks",
  author = "Liu, Xiao and Ji, Kaixuan and Fu, Yicheng and Tam, Weng and Du, Zhengxiao and Yang, Zhilin and Tang, Jie",
  booktitle = "Proceedings of the 60th Annual Meeting of the Association for Computational Linguistics (Volume 2: Short Papers)",
  month = may,
  year = "2022",
  address = "Dublin, Ireland",
  publisher = "Association for Computational Linguistics",
  pages = "61-68",
}

@inproceedings{NEURIPS2023_eef6aecf,
  title = {Universality and Limitations of Prompt Tuning},
  author = {Wang, Yihan and Chauhan, Jatin and Wang, Wei and Hsieh, Cho-Jui},
  booktitle = NIPS,
  pages = {75623--75643},
  publisher = {Curran Associates, Inc.},
  volume = {36},
  year = {2023}
}

@inproceedings{TTT++,
  title = {TTT++: When Does Self-Supervised Test-Time Training Fail or Thrive?},
  author = {Liu, Yuejiang and Kothari, Parth and van Delft, Bastien and Bellot-Gurlet, Baptiste and Mordan, Taylor and Alahi, Alexandre},
  booktitle = NIPS,
  pages = {21808--21820},
  publisher = {Curran Associates, Inc.},
  volume = {34},
  year = {2021}
}

@inproceedings{TTAC,
  title = {Revisiting Realistic Test-Time Training: Sequential Inference and Adaptation by Anchored Clustering},
  author = {Su, Yongyi and Xu, Xun and Jia, Kui},
  booktitle = NIPS,
  pages = {17543--17555},
  publisher = {Curran Associates, Inc.},
  volume = {35},
  year = {2022}
}

@inproceedings{Ada-ReAlign,
  title = {Test-time Adaptation in Non-stationary Environments via Adaptive Representation Alignment},
  author = {Zhang, Zhen-Yu and Xie, Zhiyu and Yao, Huaxiu and Sugiyama, Masashi},
  booktitle = NIPS,
  pages = {94607--94632},
  publisher = {Curran Associates, Inc.},
  volume = {37},
  year = {2024}
}

@InProceedings{DA-TTA,
  author="Wang, Ziqiang and Chi, Zhixiang and Wu, Yanan and Gu, Li and Liu, Zhi and Plataniotis, Konstantinos and Wang, Yang",
  title="Distribution Alignment for Fully Test-Time Adaptation with Dynamic Online Data Streams",
  booktitle=ECCV,
  year=2025,
  publisher="Springer Nature Switzerland",
  address="Cham",
  pages="332--349",
}

@InProceedings{FOA,
  title = {Test-Time Model Adaptation with Only Forward Passes},
  author = {Niu, Shuaicheng and Miao, Chunyan and Chen, Guohao and Wu, Pengcheng and Zhao, Peilin},
  booktitle = ICML,
  pages = {38298--38315},
  year = {2024},
  volume = {235},
  series = {Proceedings of Machine Learning Research},
  month = {21--27 Jul},
  publisher = {PMLR}
}

@inproceedings{BERT,
  title = "{BERT}: Pre-training of Deep Bidirectional Transformers for Language Understanding",
  author = "Devlin, Jacob and Chang, Ming-Wei and Lee, Kenton and Toutanova, Kristina",
  booktitle = "Proceedings of the 2019 Conference of the North {A}merican Chapter of the Association for Computational Linguistics: Human Language Technologies, Volume 1 (Long and Short Papers)",
  month = jun,
  year = "2019",
  address = "Minneapolis, Minnesota",
  publisher = "Association for Computational Linguistics",
  pages = "4171--4186"
}

@InProceedings{MAE,
  author    = {He, Kaiming and Chen, Xinlei and Xie, Saining and Li, Yanghao and Doll\'ar, Piotr and Girshick, Ross},
  title     = {Masked Autoencoders Are Scalable Vision Learners},
  booktitle = CVPR,
  month     = {June},
  year      = {2022},
  pages     = {16000-16009}
}

@misc{hinton2015distillingknowledgeneuralnetwork,
  title={Distilling the Knowledge in a Neural Network}, 
  author={Geoffrey Hinton and Oriol Vinyals and Jeff Dean},
  year={2015},
  eprint={1503.02531},
  archivePrefix={arXiv},
  primaryClass={stat.ML},
  url={https://arxiv.org/abs/1503.02531}
}

@InProceedings{temperature-scaling,
  title = {On Calibration of Modern Neural Networks},
  author = {Chuan Guo and Geoff Pleiss and Yu Sun and Kilian Q. Weinberger},
  booktitle = ICML,
  pages = {1321--1330},
  year = {2017},
  volume = {70},
  series = {Proceedings of Machine Learning Research},
  month = {06--11 Aug},
  publisher = {PMLR}
}

@InProceedings{SimCLR,
  title = {A Simple Framework for Contrastive Learning of Visual Representations},
  author = {Chen, Ting and Kornblith, Simon and Norouzi, Mohammad and Hinton, Geoffrey},
  booktitle = ICML,
  pages = {1597--1607},
  year = {2020},
  volume = {119},
  series = {Proceedings of Machine Learning Research},
  month = {13--18 Jul},
  publisher = {PMLR}
}

@InProceedings{MoCo,
  author = {He, Kaiming and Fan, Haoqi and Wu, Yuxin and Xie, Saining and Girshick, Ross},
  title = {Momentum Contrast for Unsupervised Visual Representation Learning},
  booktitle = CVPR,
  month = {June},
  year = {2020}
}

@misc{ks,
  title={The Kinetics Human Action Video Dataset}, 
  author={Will Kay and Joao Carreira and Karen Simonyan and Brian Zhang and Chloe Hillier and Sudheendra Vijayanarasimhan and Fabio Viola and Tim Green and Trevor Back and Paul Natsev and Mustafa Suleyman and Andrew Zisserman},
  year={2017},
  eprint={1705.06950},
  archivePrefix={arXiv},
  primaryClass={cs.CV},
  url={https://arxiv.org/abs/1705.06950}
}

@InProceedings{vgg,
  author={Chen, Honglie and Xie, Weidi and Vedaldi, Andrea and Zisserman, Andrew},
  booktitle={ICASSP 2020 - 2020 IEEE International Conference on Acoustics, Speech and Signal Processing (ICASSP)}, 
  title={Vggsound: A Large-Scale Audio-Visual Dataset}, 
  year={2020},
  pages={721-725}
}

@inproceedings{cavmae,
  title={Contrastive Audio-Visual Masked Autoencoder},
  year={2023},
  author={Yuan Gong and Andrew Rouditchenko and Alexander H. Liu and David Harwath and Leonid Karlinsky and Hilde Kuehne and James R. Glass},
  booktitle=ICLR
}

@inproceedings{DeYO,
  title={Entropy is not Enough for Test-Time Adaptation: From the Perspective of Disentangled Factors},
  year={2024},
  author={Jonghyun Lee and Dahuin Jung and Saehyung Lee and Junsung Park and Juhyeon Shin and Uiwon Hwang and Sungroh Yoon},
  booktitle=ICLR
}

@inproceedings{error,
  author = {Chen, Chaoqi and Xie, Weiping and Huang, Wenbing and Rong, Yu and Ding, Xinghao and Huang, Yue and Xu, Tingyang and Huang, Junzhou},
  title = {Progressive Feature Alignment for Unsupervised Domain Adaptation},
  booktitle = CVPR,
  month = {June},
  year = {2019}
}

@InProceedings{MM-TTA,
  author    = {Shin, Inkyu and Tsai, Yi-Hsuan and Zhuang, Bingbing and Schulter, Samuel and Liu, Buyu and Garg, Sparsh and Kweon, In So and Yoon, Kuk-Jin},
  title     = {MM-TTA: Multi-Modal Test-Time Adaptation for 3D Semantic Segmentation},
  booktitle = CVPR,
  month     = {June},
  year      = {2022},
  pages     = {16928-16937}
}

@InProceedings{GPT,
  title = {Improving Visual Prompt Tuning for Self-supervised Vision Transformers},
  author = {Yoo, Seungryong and Kim, Eunji and Jung, Dahuin and Lee, Jungbeom and Yoon, Sungroh},
  booktitle = ICML,
  pages = {40075--40092},
  year = {2023},
  volume = {202},
  series = {Proceedings of Machine Learning Research},
  month = {23--29 Jul},
  publisher = {PMLR}
}

@inproceedings{TPT,
  author = {Shu, Manli and Nie, Weili and Huang, De-An and Yu, Zhiding and Goldstein, Tom and Anandkumar, Anima and Xiao, Chaowei},
  booktitle = NIPS,
  pages = {14274--14289},
  publisher = {Curran Associates, Inc.},
  title = {Test-Time Prompt Tuning for Zero-Shot Generalization in Vision-Language Models},
  volume = {35},
  year = {2022}
}

@InProceedings{DiffTPT,
  author    = {Feng, Chun-Mei and Yu, Kai and Liu, Yong and Khan, Salman and Zuo, Wangmeng},
  title     = {Diverse Data Augmentation with Diffusions for Effective Test-time Prompt Tuning},
  booktitle = ICCV,
  month     = {October},
  year      = {2023},
  pages     = {2704-2714}
}

@InProceedings{HisTPT,
  title={Historical Test-time Prompt Tuning for Vision Foundation Models},
  author={Jingyi Zhang and Jiaxing Huang and Xiaoqin Zhang and Ling Shao and Shijian Lu},
  booktitle=NIPS,
  year={2024}
}

@article{CoOp,
  title={Learning to prompt for vision-language models},
  author={Zhou, Kaiyang and Yang, Jingkang and Loy, Chen Change and Liu, Ziwei},
  journal=IJCV,
  volume={130},
  number={9},
  pages={2337--2348},
  year={2022},
  publisher={Springer}
}

@InProceedings{CoCoOp,
  author    = {Zhou, Kaiyang and Yang, Jingkang and Loy, Chen Change and Liu, Ziwei},
  title     = {Conditional Prompt Learning for Vision-Language Models},
  booktitle = CVPR,
  month     = {June},
  year      = {2022},
  pages     = {16816-16825}
}

@ARTICLE{MOSI,
  author={Zadeh, Amir and Zellers, Rowan and Pincus, Eli and Morency, Louis-Philippe},
  journal={IEEE Intelligent Systems}, 
  title={Multimodal Sentiment Intensity Analysis in Videos: Facial Gestures and Verbal Messages}, 
  year={2016},
  volume={31},
  number={6},
  pages={82-88},
  doi={10.1109/MIS.2016.94}
}

@inproceedings{SIMS,
  title = "{CH}-{SIMS}: A {C}hinese Multimodal Sentiment Analysis Dataset with Fine-grained Annotation of Modality",
  author = "Yu, Wenmeng and Xu, Hua and Meng, Fanyang and Zhu, Yilin and Ma, Yixiao and Wu, Jiele and Zou, Jiyun and Yang, Kaicheng",
  booktitle = "Proceedings of the 58th Annual Meeting of the Association for Computational Linguistics",
  month = jul,
  year = 2020,
  address = "Online",
  publisher = "Association for Computational Linguistics",
  url = "https://aclanthology.org/2020.acl-main.343/",
  doi = "10.18653/v1/2020.acl-main.343",
  pages = "3718--3727"
}

@inproceedings{10.1145/3581783.3611764,
author = {Zhang, Yang and Feng, Songhe},
title = {Enhancing Domain-Invariant Parts for Generalized Zero-Shot Learning},
year = {2023},
isbn = {9798400701085},
publisher = {Association for Computing Machinery},
address = {New York, NY, USA},
url = {https://doi.org/10.1145/3581783.3611764},
doi = {10.1145/3581783.3611764},
booktitle = {Proceedings of the 31st ACM International Conference on Multimedia},
pages = {6283–6291},
numpages = {9},
keywords = {domain generalization, generalized zero-shot learning, self-paced learning.},
location = {Ottawa ON, Canada},
series = {MM '23}
}

\newpage
\clearpage
\appendix
\section{Appendix}

\subsection{Proof}
\label{proof}

\setcounter{theorem}{0}
\begin{theorem}
Given $x_1, \dots, x_n \in \mathbb{R}^d$ independently drawn from a multivariate normal distribution $\mathcal{N}(\mu, \Sigma)$, let $\hat{\Sigma}$ be the unbiased sample covariance matrix and $\hat{\sigma}^2 = [\hat{\sigma}_1^2, \dots, \hat{\sigma}_d^2]^T$ be the vector of its diagonal entries. Then, the mean squared errors satisfy:
\begin{equation}
\mathbb{E}\left[\|\hat{\Sigma} - \Sigma\|_F^2\right] = \mathcal{O}\left(\frac{d^2}{n}\right),
\mathbb{E}\left[\|\hat{\sigma}^2 - \sigma^2\|_2^2\right] = \mathcal{O}\left(\frac{d}{n}\right).
\end{equation}
\end{theorem}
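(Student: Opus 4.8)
The plan is to reduce both mean squared errors to sums of entrywise variances of the unbiased sample covariance, apply the standard entrywise variance formula for a Wishart-distributed matrix, and then bound the resulting $\Sigma$-dependent quantities under the (implicit) assumption that the spectrum of $\Sigma$ does not grow with $d$ — e.g.\ $\|\Sigma\|_{\mathrm{op}} = \mathcal{O}(1)$ — which is what makes the asymptotic rates in $d$ and $n$ well posed.

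First I would use translation invariance to assume $\mu = 0$ without loss of generality. Because $\hat{\Sigma}$ is unbiased, $\mathbb{E}[\hat{\Sigma}_{ij}] = \Sigma_{ij}$, hence $\mathbb{E}\big[\|\hat{\Sigma} - \Sigma\|_F^2\big] = \sum_{i,j=1}^d \mathrm{Var}(\hat{\Sigma}_{ij})$ and $\mathbb{E}\big[\|\hat{\sigma}^2 - \sigma^2\|_2^2\big] = \sum_{i=1}^d \mathrm{Var}(\hat{\Sigma}_{ii})$. Next I would invoke the fact that $(n-1)\hat{\Sigma} \sim W_d(\Sigma, n-1)$ — or, if a self-contained argument is preferred, expand the relevant fourth-order Gaussian moments via Isserlis' theorem — to obtain $\mathrm{Var}(\hat{\Sigma}_{ij}) = (\Sigma_{ij}^2 + \Sigma_{ii}\Sigma_{jj})/(n-1)$, and in particular $\mathrm{Var}(\hat{\Sigma}_{ii}) = 2\Sigma_{ii}^2/(n-1)$.

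Substituting these identities yields $\mathbb{E}\big[\|\hat{\Sigma} - \Sigma\|_F^2\big] = \big(\|\Sigma\|_F^2 + (\mathrm{tr}\,\Sigma)^2\big)/(n-1)$ and $\mathbb{E}\big[\|\hat{\sigma}^2 - \sigma^2\|_2^2\big] = 2\big(\sum_{i=1}^d \Sigma_{ii}^2\big)/(n-1)$. It then remains to control the $\Sigma$-dependent factors. With $\lambda_1,\dots,\lambda_d \ge 0$ the eigenvalues of $\Sigma$ and $\|\Sigma\|_{\mathrm{op}} \le \kappa$, the elementary inequality $\sum_i \lambda_i^2 \le (\sum_i \lambda_i)^2$ gives $\|\Sigma\|_F^2 \le (\mathrm{tr}\,\Sigma)^2 \le (d\kappa)^2$, so the first expression is $\mathcal{O}(d^2/n)$; and since every diagonal entry of a PSD matrix obeys $\Sigma_{ii} \le \|\Sigma\|_{\mathrm{op}} \le \kappa$, we get $\sum_i \Sigma_{ii}^2 \le d\kappa^2$, so the second is $\mathcal{O}(d/n)$. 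One may equally assume bounded entries $|\Sigma_{ij}| \le c$, which gives the same conclusion more directly.

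The one genuinely nontrivial step is the entrywise variance formula, where the sample-mean recentering must be handled with care: either by citing the classical fact that $\sum_i (x_i - \bar{x})(x_i - \bar{x})^\top$ is $W_d(\Sigma, n-1)$, or by carrying out the Isserlis expansion of $\mathbb{E}[(x_i-\bar x)_a (x_i-\bar x)_b (x_j-\bar x)_c (x_j-\bar x)_d]$ and checking that the $\bar x$ cross terms merely replace $n$ by $n-1$ and leave the $\Theta(1/n)$ order intact. Everything else — the two algebraic substitutions and the PSD/boundedness inequalities — is routine bookkeeping; I would, however, state explicitly in the theorem that the hidden constant in $\mathcal{O}(\cdot)$ is proportional to $\|\Sigma\|_{\mathrm{op}}^2$.
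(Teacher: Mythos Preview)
Your proposal is correct and follows essentially the same route as the paper: reduce both MSEs to sums of entrywise variances via unbiasedness, apply the Wishart variance identity $\mathrm{Var}(\hat{\Sigma}_{ij}) = (\Sigma_{ij}^2 + \Sigma_{ii}\Sigma_{jj})/(n-1)$, and then bound the resulting $\|\Sigma\|_F^2 + (\mathrm{tr}\,\Sigma)^2$ and $\sum_i \Sigma_{ii}^2$ under a boundedness assumption on $\Sigma$. The only cosmetic difference is that the paper states the boundedness assumption directly on the entries of $\Sigma$ rather than on $\|\Sigma\|_{\mathrm{op}}$, which you already note as an equivalent alternative.
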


\begin{proof}
The squared Frobenius norm of the error is:
\begin{equation}
\| \hat{\Sigma}-\Sigma\|_{F}^{2}=\sum_{i=1}^{d} \sum_{j=1}^{d} ( \hat{\Sigma}_{i j}-\Sigma_{i j} )^{2}.
\end{equation}
Taking expectations and using the unbiasedness of $\hat{\boldsymbol{\Sigma}}$ ($\mathbb{E} [ \hat{\boldsymbol{\Sigma}} ]=\boldsymbol{\Sigma}$):
\begin{equation}
\label{13}
\begin{aligned}
\mathbb{E} \left[ \| \hat{\Sigma}-\Sigma\|_{F}^{2} \right]&=
\sum_{i=1}^d \sum_{j=1}^d \mathbb{E}\left[(\hat{\Sigma}_{ij} - \Sigma_{ij})^2\right]\\
&=\sum_{i=1}^{d} \sum_{j=1}^{d} \mathrm{V a r} ( \hat{\Sigma}_{i j} ).
\end{aligned}
\end{equation}
Since $(n-1)\hat{\Sigma} \sim W_d(n-1, \Sigma)$, the Wishart distribution properties imply:
\begin{equation}
\text{Var}((n-1)\hat{\Sigma}_{ij}) = (n-1) (\Sigma_{ij}^2 + \Sigma_{ii} \Sigma_{jj}),
\end{equation}
and thus:
\begin{equation}
\label{15}
\text{Var}(\hat{\Sigma}_{ij}) = \frac{\Sigma_{ij}^2 + \Sigma_{ii} \Sigma_{jj}}{n-1}.
\end{equation}
Substituting Eq.~\eqref{15} back into Eq.~\eqref{13}:
\begin{equation}
\begin{aligned}
\mathbb{E}\left[\|\hat{\Sigma} - \Sigma\|_F^2\right] &= \frac{1}{n-1} \left( \sum_{i,j} \Sigma_{ij}^2 + \sum_{i,j} \Sigma_{ii} \Sigma_{jj} \right) \\
&= \frac{\| \Sigma\|_{F}^{2}+( \operatorname{t r} ( \Sigma) )^{2}} {n-1}.
\end{aligned}
\end{equation}
Assuming $\Sigma$ has bounded entries (independent of $n$ and $d$), $\| \Sigma\|_{F}^{2}=\mathcal{O} ( d^{2} )$ and $\mathrm{t r} ( \Sigma)={\cal O} ( d )$. Therefore:
\begin{equation*}
\mathbb{E}\left[\|\hat{\Sigma} - \Sigma\|_F^2\right] = \mathcal{O}\left(\frac{d^2}{n}\right).
\end{equation*}

The squared L2 norm of the error is:
\begin{equation}
\|\hat{\sigma}^2 - \sigma^2\|_2^2 = \sum_{i=1}^d (\hat{\sigma}_i^2 - \sigma_i^2)^2.
\end{equation}
Taking expectations and using $\mathbb{E} [ {\hat{\sigma}}_{i}^{2} ]=\sigma_{i}^{2}$:
\begin{equation}
\label{18}
\mathbb{E}\left[\|\hat{\sigma}^2 - \sigma^2\|_2^2\right] = \sum_{i=1}^d \mathbb{E}\left[(\hat{\sigma}_i^2 - \sigma_i^2)^2\right] = \sum_{i=1}^d \text{Var}(\hat{\sigma}_i^2).
\end{equation}
Since $\hat{\sigma}_i^2 = \hat{\Sigma}_{ii}$, from the Wishart distribution:
\begin{equation}
\label{19}
\text{Var}(\hat{\sigma}_i^2) = \text{Var}(\hat{\Sigma}_{ii}) = \frac{\Sigma_{ii}^2 + \Sigma_{ii} \Sigma_{ii}}{n-1} = \frac{2 \Sigma_{ii}^2}{n-1}.
\end{equation}
Substituting Eq.~\eqref{19} back into Eq.~\eqref{18}:
\begin{equation}
\mathbb{E}\left[\|\hat{\sigma}^2 - \sigma^2\|_2^2\right] = \sum_{i=1}^d \frac{2 \Sigma_{ii}^2}{n-1} = \frac{2}{n-1} \sum_{i=1}^d \Sigma_{ii}^2.
\end{equation}
Assuming $\Sigma_{ii}$ are bounded, $\sum_{i=1}^{d} \Sigma_{i i}^{2}={\cal O} ( d )$, yielding:
\begin{equation*}
\mathbb{E}\left[\|\hat{\sigma}^2 - \sigma^2\|_2^2\right] = \mathcal{O}\left(\frac{d}{n}\right).
\end{equation*}
\end{proof}

\subsection{Algorithm of BriMPR}
\label{code}

\begin{algorithm}[H]
\caption{BriMPR}
\label{alg:BriMPR}
\textbf{Input}: test data stream $(\mathbf{x}_t^{a}, \mathbf{x}_t^{v})$; modality-specific encoders $\Phi^a$ and $\Phi^v$ with prompts $\boldsymbol{P}^a = \{\boldsymbol{P}_0^a, \boldsymbol{P}_1^a, \dots, \boldsymbol{P}_{N - 1}^a\}$ and $\boldsymbol{P}^v = \{\boldsymbol{P}_0^v, \boldsymbol{P}_1^v, \dots, \boldsymbol{P}_{N - 1}^v\}$; joint module $\Psi$; classification head $h$; hyperparameters $\tau_0$, $D_0$, $\tau$.
\begin{algorithmic}[1] 
\FOR{$t = 1$ to $T$}
\STATE Get masked data $(\mathbf{x}^{a_m}, \mathbf{x}^{v_m})  \stackrel{\text{mask}}{\longleftarrow} (\mathbf{x}^{a}, \mathbf{x}^{v}) $.

\STATE Get fused features $\boldsymbol{Z}^{a_m v} = \Psi([\Phi^a(\mathbf{x}^{a_m});\Phi^v(\mathbf{x}^{v})])$, $\boldsymbol{Z}^{a v_m} = \Psi([\Phi^a(\mathbf{x}^{a});\Phi^v(\mathbf{x}^{v_m})])$ and $\boldsymbol{Z}^{a v} = \Psi([\Phi^a(\mathbf{x}^{a});\Phi^v(\mathbf{x}^{v})])$.
\STATE Get augmented predictions $\boldsymbol{y}^{a_mv}$, $\boldsymbol{y}^{av_m}$ via Eq.~\eqref{augpred}.

\STATE Calculate $\text{Disc}^a$, $\text{Disc}^v$ and $\text{Disc}^J$.
\STATE Calculate the adaptive temperature coefficient $\text{AdaTp} = 1 + \tau_{0} / (1 + \exp(D_0 - \operatorname{Disc}^{J}))$.
\STATE Get calibrated pseudo-labels $\hat{\boldsymbol{y}}^{av}$ via Eq.~\eqref{pseudolabel}.

\STATE Get unimodal features $\boldsymbol{Z}^{a}, \boldsymbol{Z}^{v}$ via Eq.~\eqref{unifeature}.

\STATE Calculate the overall loss $\cal{L}_{\text{BriMPR}}$ via Eq.~\eqref{overallloss}.
\STATE Update the modality-specific prompts $\boldsymbol{P}^a$ and $\boldsymbol{P}^v$.

\ENDFOR
\end{algorithmic}
\end{algorithm}

\subsection{Experimental Details}
\label{more_exp_details}

\begin{figure*}[t]
\centering
\begin{subfigure}{0.24\textwidth}
\includegraphics[width=\textwidth]{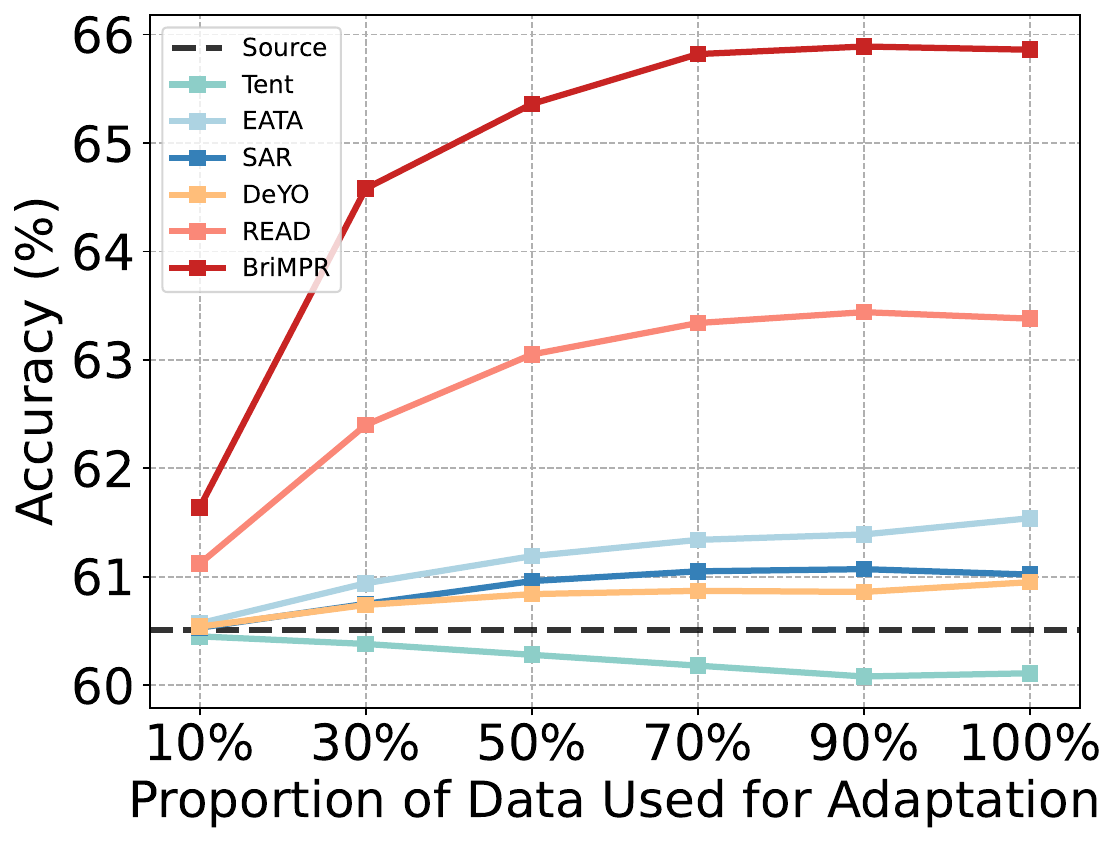}
\caption{Kinetics50-video}
\label{fig:limit-a}
\end{subfigure}
\hfill
\begin{subfigure}{0.24\textwidth}
\includegraphics[width=\textwidth]{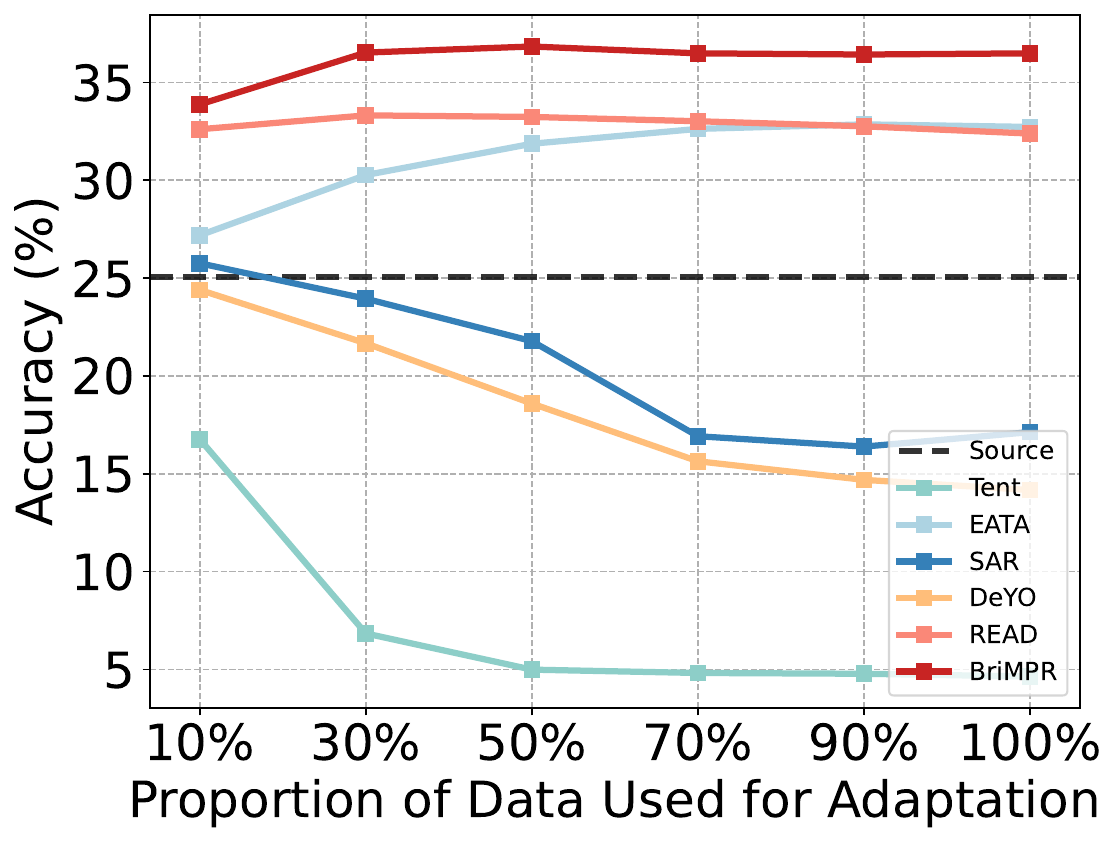}
\caption{VGGSound-audio}
\label{fig:limit-b}
\end{subfigure}
\hfill
\begin{subfigure}{0.24\textwidth}
\includegraphics[width=\textwidth]{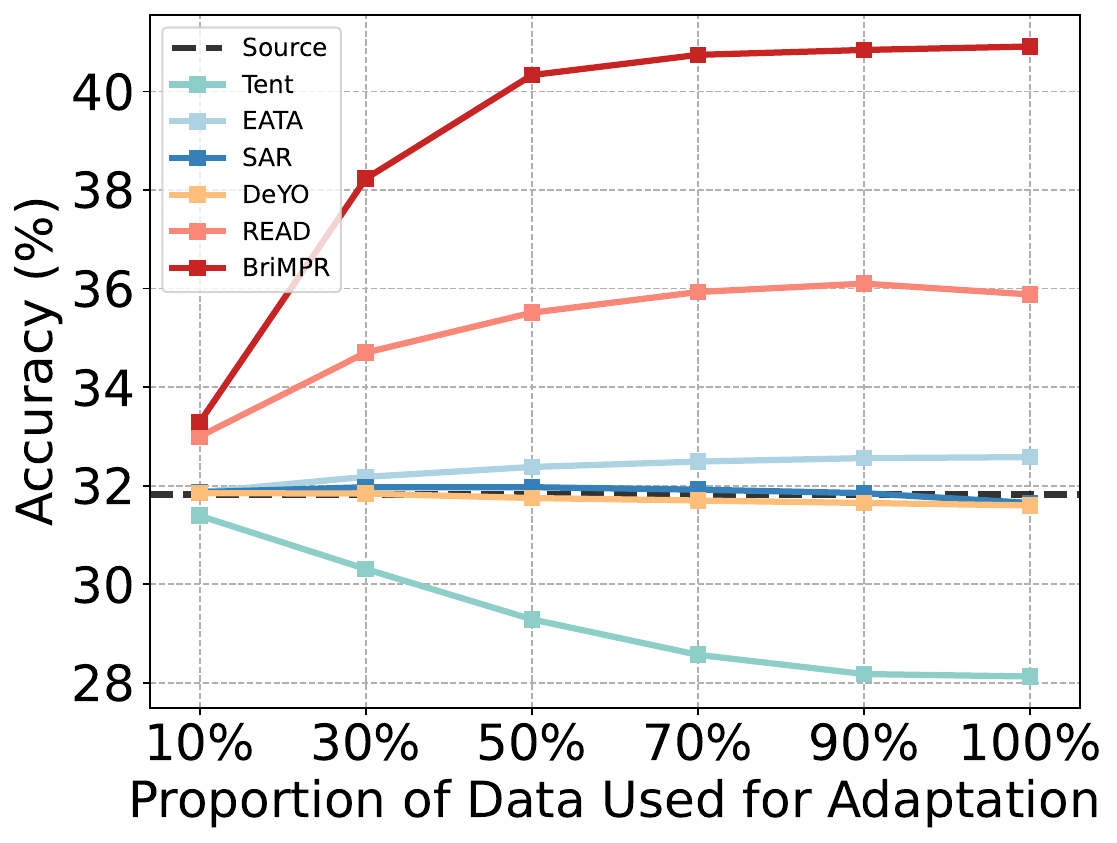}
\caption{Kinetics50-both}
\label{fig:limit-c}
\end{subfigure}
\hfill
\begin{subfigure}{0.24\textwidth}
\includegraphics[width=\textwidth]{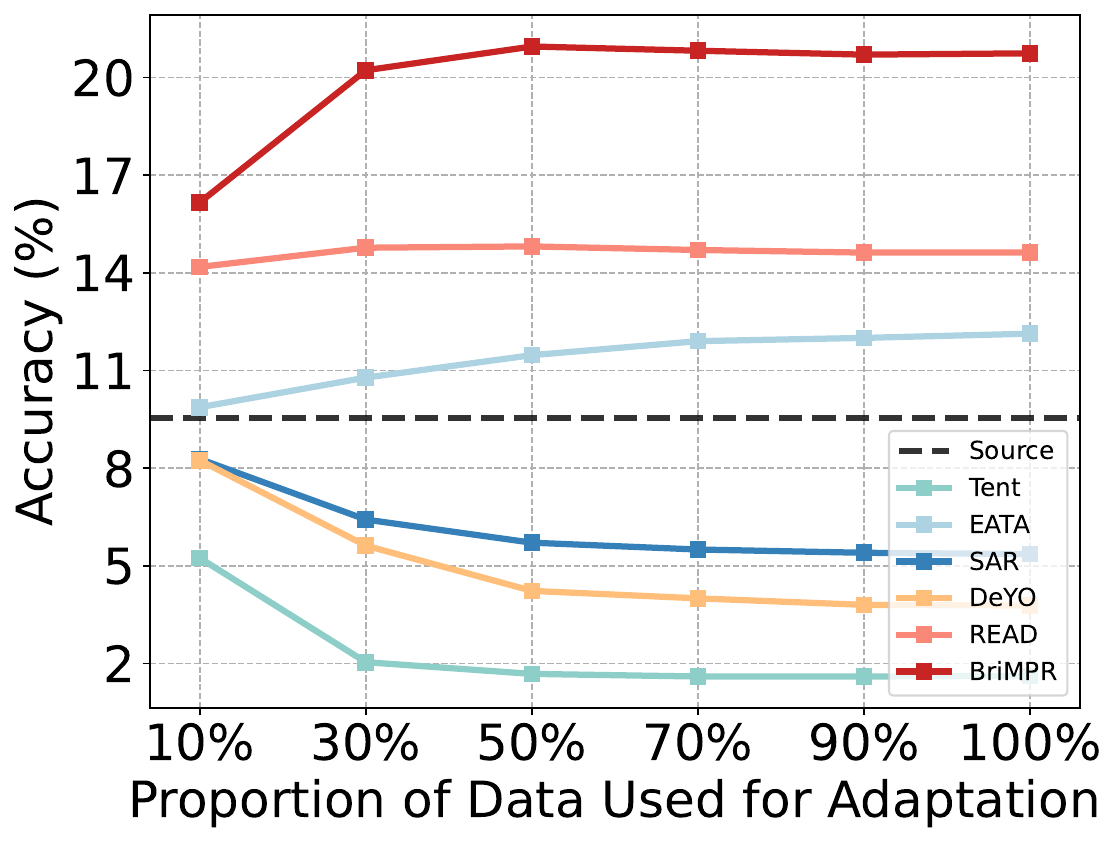}
\caption{VGGSound-both}
\label{fig:limit-d}
\end{subfigure}
\caption{Performance comparison when the data available for adaptation is limited under two tasks on Kinetics50-C and VGGSound-C. (a) and (b) correspond to the unimodal shift setting; (c) and (d) correspond to the multimodal shift setting.}
\label{fig:limit}
\end{figure*}


\begin{table*}[t]
\centering
\setlength{\tabcolsep}{1mm}
\small
\begin{tabular}{lcccccccccccccccc}
\toprule
 &
  \multicolumn{3}{c}{Noise} &
  \multicolumn{4}{c}{Blur} &
  \multicolumn{4}{c}{Weather} &
  \multicolumn{4}{c}{Digital} &
   \\ \cmidrule(lr){2-4}\cmidrule(lr){5-8}\cmidrule(lr){9-12}\cmidrule(lr){13-16}
Method &
  Gauss. &
  Shot &
  Impul. &
  Defoc. &
  Glass &
  Motion &
  Zoom &
  Snow &
  Frost &
  Fog &
  Bright. &
  Contr. &
  Elast. &
  Pixel. &
  Jpeg &
  \cellcolor[cmyk]{0.1012,0.0945,0,0}Avg. \\ \midrule
Source &
  62.7 &
  62.7 &
  62.4 &
  73.3 &
  70.6 &
  75.6 &
  71.3 &
  67.9 &
  65.7 &
  63.6 &
  79.5 &
  68.2 &
  73.6 &
  77.2 &
  75.2 &
  \cellcolor[cmyk]{0.1012,0.0945,0,0}70.0 \\
$\bullet$ Tent &
  63.1 &
  63.0 &
  62.9 &
  73.4 &
  71.3 &
  \underline{76.0} &
  72.1 &
  68.6 &
  66.6 &
  63.7 &
  79.3 &
  68.4 &
  74.3 &
  77.1 &
  75.0 &
  \cellcolor[cmyk]{0.1012,0.0945,0,0}70.3 \\
$\bullet$ EATA &
  63.3 &
  63.2 &
  63.1 &
  73.4 &
  71.5 &
  \underline{76.0} &
  72.2 &
  68.5 &
  66.8 &
  65.1 &
  79.4 &
  68.6 &
  74.3 &
  77.1 &
  75.2 &
  \cellcolor[cmyk]{0.1012,0.0945,0,0}70.5 \\
$\bullet$ SAR &
  63.2 &
  63.0 &
  62.8 &
  73.2 &
  71.5 &
  \textbf{76.1} &
  72.4 &
  68.5 &
  67.0 &
  65.1 &
  79.4 &
  68.7 &
  74.2 &
  77.1 &
  75.1 &
  \cellcolor[cmyk]{0.1012,0.0945,0,0}70.5 \\
$\bullet$ DeYO &
  63.3 &
  63.3 &
  63.2 &
  73.4 &
  71.4 &
  75.9 &
  72.2 &
  68.5 &
  66.7 &
  64.5 &
  79.3 &
  68.7 &
  74.2 &
  77.1 &
  75.3 &
  \cellcolor[cmyk]{0.1012,0.0945,0,0}70.5 \\
$\bullet$ READ$^\dag$ &
  \underline{64.0} &
  \underline{63.9} &
  \underline{64.1} &
  73.5 &
  \underline{72.1} &
  75.8 &
  \textbf{73.3} &
  \underline{69.5} &
  \underline{68.8} &
  \underline{68.9} &
  \underline{79.5} &
  \underline{69.8} &
  \underline{74.9} &
  77.3 &
  \underline{75.7} &
  \cellcolor[cmyk]{0.1012,0.0945,0,0}\underline{71.4} \\
$\bullet$ SuMi$^\dag$ &
  63.1 &
  63.2&
  63.1 &
  \underline{73.6} &
  71.5 &
  75.9 &
  72.1 &
  68.7 &
  66.7 &
  64.4 &
  \textbf{79.6} &
  68.7 &
  74.2 &
  \underline{77.4} &
  75.5 &
  \cellcolor[cmyk]{0.1012,0.0945,0,0}70.5 \\
\rowcolor[cmyk]{0.193,0,0.2222,0} 
$\bullet$ BriMPR$^\dag$ &
  \textbf{67.2} &
  \textbf{67.1} &
  \textbf{67.0} &
  \textbf{73.6} &
  \textbf{73.7} &
  75.8 &
  \underline{73.0} &
  \textbf{71.0} &
  \textbf{70.0} &
  \textbf{70.9} &
  79.2 &
  \textbf{71.1} &
  \textbf{75.6} &
  \textbf{77.8} &
  \textbf{76.1} &
  \textbf{72.6} \\ \bottomrule
\end{tabular}
\caption{Comparison with SOTA methods on Kinetics50-C under the unimodal shift setting (mixed severity levels of video corruption).}
\label{tab:ks50_mix_video}
\end{table*}

\begin{table}[t]
\centering
\setlength{\tabcolsep}{1mm}
\small
\begin{tabular}{lccccccc}
\toprule
 &
  \multicolumn{3}{c}{Noise} &
  \multicolumn{3}{c}{Weather} & \\ \cmidrule(lr){2-4} \cmidrule(lr){5-7}
Method &
  Gauss. &
  Traff. &
  Crowd &
  Rain &
  Thund. &
  Wind &
  \cellcolor[cmyk]{0.1012,0.0945,0,0}Avg. \\ \midrule
Source &
  76.7 &
  65.0 &
  68.5 &
  69.2 &
  68.9 &
  72.1 &
  \cellcolor[cmyk]{0.1012,0.0945,0,0}70.1 \\
$\bullet$ Tent &
  76.8 &
  67.1 &
  69.9 &
  70.9 &
  69.3 &
  \underline{72.8} &
  \cellcolor[cmyk]{0.1012,0.0945,0,0}71.1 \\
$\bullet$ EATA &
  \underline{77.0} &
  67.4 &
  70.1 &
  \underline{71.4} &
  70.6 &
  72.5 &
  \cellcolor[cmyk]{0.1012,0.0945,0,0}71.5 \\
$\bullet$ SAR &
  76.8 &
  67.0 &
  69.8 &
  70.8 &
  70.2 &
  72.5 &
  \cellcolor[cmyk]{0.1012,0.0945,0,0}71.2 \\
$\bullet$ DeYO &
  76.9 &
  67.2 &
  70.0 &
  71.2 &
  70.0 &
  72.6 &
  \cellcolor[cmyk]{0.1012,0.0945,0,0}71.3 \\
$\bullet$ READ$^\dag$ &
  \textbf{77.1} &
  \underline{70.7} &
  \underline{71.3} &
  \textbf{72.5} &
  \underline{73.0} &
  72.5 &
  \cellcolor[cmyk]{0.1012,0.0945,0,0}\underline{72.8} \\
$\bullet$ SuMi$^\dag$ &
  76.8 &
  66.0 &
  69.4 &
  70.9 &
  70.1 &
  72.1 &
  \cellcolor[cmyk]{0.1012,0.0945,0,0}70.9 \\
\rowcolor[cmyk]{0.193,0,0.2222,0} 
$\bullet$ BriMPR$^\dag$ &
  \textbf{77.1} &
  \textbf{70.8} &
  \textbf{72.6} &
  \textbf{72.5} &
  \textbf{73.4} &
  \textbf{73.0} &
  \textbf{73.2} \\ \bottomrule
\end{tabular}
\caption{Comparison with SOTA methods on Kinetics50-C under the unimodal shift setting (mixed severity levels of audio corruption).}
\label{tab:ks50_mix_audio}
\end{table}

\begin{table*}[t]
\centering
\setlength{\tabcolsep}{1mm}
\small
\begin{tabular}{lcccccccccccccccc}
\toprule
 &
  \multicolumn{3}{c}{Noise} &
  \multicolumn{4}{c}{Blur} &
  \multicolumn{4}{c}{Weather} &
  \multicolumn{4}{c}{Digital} &
   \\ \cmidrule(lr){2-4}\cmidrule(lr){5-8}\cmidrule(lr){9-12}\cmidrule(lr){13-16}
Method &
  Gauss. &
  Shot &
  Impul. &
  Defoc. &
  Glass &
  Motion &
  Zoom &
  Snow &
  Frost &
  Fog &
  Bright. &
  Contr. &
  Elast. &
  Pixel. &
  Jpeg &
  \cellcolor[cmyk]{0.1012,0.0945,0,0}Avg. \\ \midrule
Source &
  35.3 &
  35.0 &
  34.6 &
  50.9 &
  51.6 &
  56.7 &
  51.0 &
  42.2 &
  42.9 &
  38.8 &
  63.1 &
  43.3 &
  56.1 &
  58.2 &
  56.2 &
  \cellcolor[cmyk]{0.1012,0.0945,0,0}47.7 \\
$\bullet$ Tent &
  30.4 &
  29.9 &
  29.8 &
  47.1 &
  50.8 &
  55.9 &
  50.1 &
  38.9 &
  41.2 &
  32.5 &
  63.3 &
  39.1 &
  56.7 &
  56.8 &
  54.7 &
  \cellcolor[cmyk]{0.1012,0.0945,0,0}45.2 \\
$\bullet$ EATA &
  36.5 &
  36.3 &
  36.1 &
  52.7 &
  53.5 &
  58.7 &
  52.9 &
  43.6 &
  44.2 &
  39.3 &
  64.6 &
  44.2 &
  57.6 &
  59.9 &
  57.8 &
  \cellcolor[cmyk]{0.1012,0.0945,0,0}49.2 \\
$\bullet$ SAR &
  35.2 &
  34.9 &
  34.4 &
  52.1 &
  53.0 &
  58.1 &
  52.9 &
  43.4 &
  44.1 &
  37.1 &
  63.8 &
  42.4 &
  57.1 &
  59.1 &
  56.9 &
  \cellcolor[cmyk]{0.1012,0.0945,0,0}48.3 \\
$\bullet$ DeYO &
  34.5 &
  33.9 &
  33.7 &
  52.1 &
  53.1 &
  58.4 &
  52.7 &
  43.0 &
  44.2 &
  37.4 &
  64.3 &
  42.8 &
  57.5 &
  59.2 &
  56.8 &
  \cellcolor[cmyk]{0.1012,0.0945,0,0}48.2 \\
$\bullet$ READ$^\dag$ &
  \underline{39.4} &
  \underline{39.3} &
  \underline{38.9} &
  \underline{57.3} &
  \underline{56.0} &
  \textbf{61.6} &
  \underline{56.0} &
  \underline{47.2} &
  \underline{48.5} &
  \underline{46.2} &
  \underline{66.5} &
  \underline{47.8} &
  \underline{59.6} &
  \underline{62.6} &
  \underline{60.3} &
  \cellcolor[cmyk]{0.1012,0.0945,0,0}\underline{52.5} \\
$\bullet$ SuMi$^\dag$ &
  35.3 &
  35.1 &
  34.8 &
  51.5 &
  52.5 &
  57.7 &
  52.0 &
  42.6 &
  43.5 &
  37.9 &
  63.9 &
  43.2 &
  56.9 &
  58.8 &
  56.7 &
  \cellcolor[cmyk]{0.1012,0.0945,0,0}48.2 \\
\rowcolor[cmyk]{0.193,0,0.2222,0} 
$\bullet$ BriMPR$^\dag$ &
  \textbf{45.2} &
  \textbf{44.5} &
  \textbf{44.4} &
  \underline{56.5} &
  \textbf{57.8} &
  \underline{60.6} &
  \textbf{57.2} &
  \textbf{49.4} &
  \textbf{50.3} &
  \textbf{51.1} &
  \textbf{66.8} &
  \textbf{51.8} &
  \textbf{61.2} &
  \textbf{64.4} &
  \textbf{62.4} &
  \textbf{54.9} \\ \bottomrule
\end{tabular}
\caption{Comparison with SOTA methods on Kinetics50-C under the multimodal shift setting (mixed severity levels).}
\label{tab:ks50_mix_both}
\end{table*}

\begin{figure*}[t]
\centering
\begin{subfigure}{0.85\textwidth}
\includegraphics[width=\textwidth]{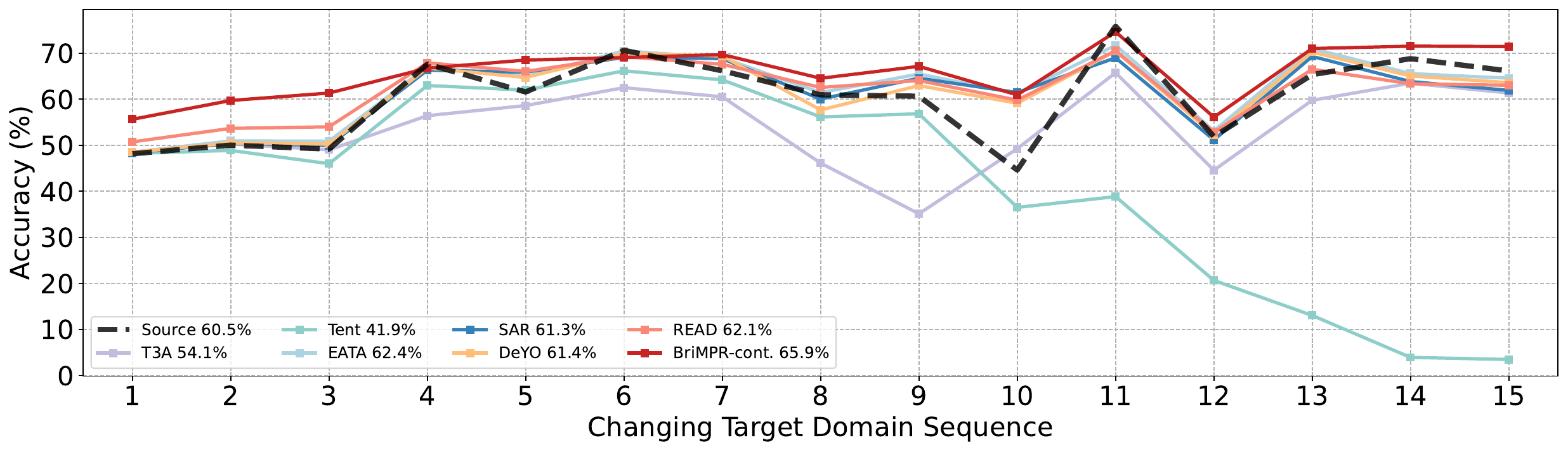}
\caption{Kinetics50-video}
\label{fig:cmmtta_ks50_video}
\end{subfigure}
\hfill
\begin{subfigure}{0.85\textwidth}
\includegraphics[width=\textwidth]{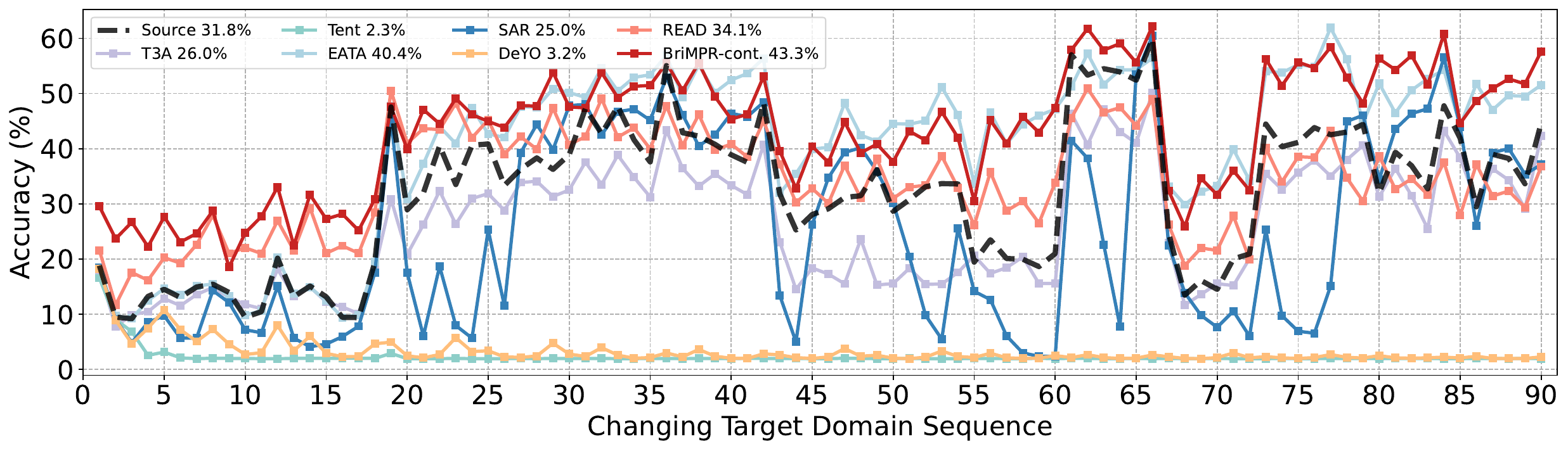}
\caption{Kinetics50-both}
\label{fig:cmmtta_ks50_both}
\end{subfigure}
\caption{Comparison with the state-of-the-arts on Kinetics50-C under the continual setting (severity level 5). ``Kinetics50-video'' contains 15 continuous domains, while ``Kinetics50-both'' contains $15 \times 6 = 90$ continuous domains. The legend shows the average accuracy across all domains.}
\label{fig:cmmtta}
\end{figure*}

\subsubsection{Details of Baselines.}
\label{baseline}

\begin{itemize}
\item \textbf{Tent}~\cite{Tent} optimizes the affine parameters in the normalization layer by minimizing the entropy of the model's predictions. Because entropy can reflect the uncertainty of the predictions, and the normalization layer is associated with the distribution information. 
In the transformer-based CAV-MAE, we replace the Batch Normalization (BN) layers in the original implementation with Layer Normalization (LN) layers.
\item \textbf{EATA}~\cite{EATA} selects reliable and on-redundant test samples with low entropy to participate in entropy minimization, and introduces a weighted Fisher regularizer to prevent significant changes in the parameters that are crucial for the in-distribution data during adaptation, thus alleviating the forgetting problem. 
The entropy threshold $E_0$ is set to $0.4\times\ln C$ (where $C$ is the number of classes). The cosine similarity threshold $\epsilon$ used for filtering redundant samples is set to 0.1. The trade-off hyperparameter $\beta$ is set to 1. The Fisher information is calculated using 2,000 unlabeled in-distribution samples, and the moving average factor $\alpha$ is set to 0.1.
\item \textbf{SAR}~\cite{SAR} introduces sharpness-aware learning and minimizes entropy, optimizing the model weights to a flat minimum to enhance the robustness against noisy samples. 
Similar to EATA, the entropy threshold $E_0$ is set to $0.4\times\ln C$. The radius $\rho$ in the sharpness-aware optimization is set to 0.05. The model recovery threshold $e_0$ is set to 0.2. The moving average factor used to track the loss value is set to 0.9.
\item \textbf{DeYO}~\cite{DeYO} introduces the Pseudo-Label Probability Difference (PLPD) metric to identify harmful samples that cannot be detected by entropy. It quantifies the impact of object shape information on predictions by measuring the difference in predictions before and after applying a single image transformation. 
The entropy threshold $\tau_{Ent}$ is set to $0.5\times\ln C$, the PLPD threshold $\tau_{PLPD}$ is set to $0.2\times\ln C$, and the normalization factor $Ent_0$ in the weighting function is set to $0.4\times\ln C$. The default patch-shuffling is used as the image transformation. 
\item \textbf{FOA}~\cite{FOA} inserts prompts at the input level and employs the derivative-free covariance matrix adaptation (CMA) evolution strategy to learn the prompts. Additionally, it further adjusts the activation features at the output feature level to align them with the source domain. 
The number of prompt embeddings $N_p$ is set to 1 with the default uniform initialization. The population size $K$ in the CMA evolution strategy is set to $27 = 4 + 3 \ln(2 \times 768)$. The trade-off parameter $\lambda$ in the fitness function is set to $0.4$. The step size $\gamma$ in the back-to-source activation shifting is set to 1.0. The moving average factor for computing the test statistics is set to 0.1.
\item \textbf{READ}~\cite{READ} updates the self-attention layer of the fusion module by optimizing the confidence-aware loss function, which promotes reliable fusion across different modalities. 
The confidence threshold $\gamma$ is set to $e^{-1}$.
\item \textbf{ABPEM}~\cite{ABPEM} aligns cross-attention to self-attention to reduce inter-modal differences, while excluding non-dominant class samples to reduce gradient noise in entropy loss.
The threshold $k$ for class ranking is set to 8/30 for Kinetics50-C and VGGSound-C. The weight $\lambda$ for the attention bootstrapping loss is set to 1.
\item \textbf{SuMi}~\cite{SuMi} progressively adapts to strongly out-of-distribution samples through interquartile range smoothing, selects high-quality samples via unimodal-assisted identification, and balances adaptation across modalities by leveraging mutual information sharing. 
The multimodal threshold $\gamma_m$ and the normalization factor $\text{Ent}_0$ are set to $0.4\times \ln C$. The unimodal threshold $\gamma_u$ is set to $e^{-1}$. The smoothing coefficient $\beta$ is set to 0.6/0.9, the weighting term $\lambda$ is set to 5.0 and the unimodal assistance $t$ is set to 1.0 by default for Kinetics50-C and VGGSound-C. For multimodal shift setting and real-world shift setting, we set the mutual information sharing term $t_{0}$ as $iter/2$.
\end{itemize}

\subsection{Further Experiments}
\label{further_experiments}

\subsubsection{Limited data for adaptation.}
To explore the adaptation process of the proposed BriMPR, we further conduct experiments in the scenario where the data available for adaptation is limited. 
In this case, the model can only utilize a portion of the test data for adaptation in the initial stage and then make predictions on the remaining test data. This is because it is unrealistic for an intelligent perception system deployed in the real world to maintain updates all the time. 
An excellent multimodal test-time adaptation algorithm should still demonstrate good performance improvements even with limited target data and benefit from an increase in the available data. 
As shown in Fig.~\ref{fig:limit}, BriMPR consistently maintains the best performance in all scenarios, demonstrating its data efficiency. Some methods even exhibit performance degradation with increased available test data (e.g., Tent, SAR, and DeYO in Fig.~\ref{fig:limit-b}), reflecting that the error accumulation~\cite{error} commonly faced by unimodal test-time adaptation is further exacerbated in multimodal scenarios.

\subsubsection{Results under Mixed Severity Levels.}
Our main experimental results are obtained with the largest corruption severity level 5. We conduct experiments under both two settings with the mixed severity levels on Kinetics50-C to further verify the robustness of our BriMPR. In this case, the severity level of the corrupted modality is randomly selected from 1 to 5. For each experiment, we generate three sets of test data with three random seeds. 
As shown in Tab.~\ref{tab:ks50_mix_video}, Tab.~\ref{tab:ks50_mix_audio}, and Tab.~\ref{tab:ks50_mix_both}, BriMPR significantly outperforms other methods in the vast majority of cases.

\subsubsection{Results of Continual Multimodal Test-Time Adaptation.}
We further introduce continuously changinging domains~\cite{CoTTA, RMT, VDP} and refer to this setting as Continual Multimodal Test-Time Adaptation (CMMTTA), which is more challenging because the model lacks domain labels during adaptation.

In the continual unimodal shift setting, only the corrupted modality undergoes distribution changes. In contrast, the continual multimodal shift setting involves distribution changes in one modality at a time. Taking ``Kinetics50-both'' as an example, each domain is represented as a combination of ``video corruption + audio corruption'', where the modality changed is highlighted. Then the domain sequence is constructed as follows: 
Gauss. + Gauss. $\rightarrow$ Gauss. + \textbf{Traff.} $\rightarrow$ \dots $\rightarrow$ Gauss. + \textbf{Wind} $\rightarrow$ \textbf{Shot.} + Wind $\rightarrow$ Shot. + \textbf{Thund.} $\rightarrow$ \dots $\rightarrow$ Shot. + \textbf{Gauss.} $\rightarrow$ \textbf{Impul.} + Gauss. $\rightarrow$ \dots, resulting in a total of 15 × 6 = 90 continuous domains.

To adapt BriMPR to the CMMTTA setting, we propose a simple variant, BriMPR-continual, which leverages the by-product unimodal distribution discrepancy $\text{Disc}^u$ to detect domain shifts. 
Specifically, we maintain a sliding window of size $w$ (e.g., $w=10$) to store the most recent $\text{Disc}^u$ values. For each new $\text{Disc}^u_t$, we calculate the Z-score with respect to the mean $\mu_t^u$ and standard deviation $\sigma_t^u$ of the windowed values as: $Z_t^u=\frac{\text{Disc}^u_t - \mu_t^u}{\sigma_t^u}$. If $Z_t^u > k$ (e.g., $k=5$), we interpret it as a significant domain shift for modality $u$, and accordingly re-initialize the modality-specific prompts $\boldsymbol{P}^u$. 
As shown in Fig.~\ref{fig:cmmtta}, all compared methods exhibit some degree of knowledge forgetting during continual adaptation. Among them, EATA~\cite{EATA} performs well after 20 domains due to its specially designed anti-forgetting mechanism. In contrast, our BriMPR-continual consistently improves over the Source baseline and achieves the best performance.

\subsubsection{Efficiency comparisons.}
Tab.~\ref{tab:efficiency} compares the computational efficiency of different methods on VGGSound-C. Although BriMPR involves data augmentation and requires additional forward passes, the augmentation is performed via masking, making it more efficient than other augmentation-based methods like DeYO. Furthermore, thanks to the parameter efficiency of prompt tuning, BriMPR introduces fewer learnable parameters.

\begin{table*}[t]
\centering
\begin{tabular}{l|cccccccc}
\toprule
Method & Source & Tent & EATA & SAR & DeYO & READ$^\dag$ & SuMi$^\dag$ & BriMPR$^\dag$ \\ \midrule
Time (s) & 66.8 & 161.1 & 169.8 & 236.6 & 220.4 & 135.2 & 424.4 & 186.2 \\
\# Params (M) & 0 & 0.218 & 0.218 & 0.218 & 0.218 & 1.772 & 0.218 & 0.169 \\ \bottomrule
\end{tabular}
\caption{Efficiency comparisons among different methods on VGGSound-C.}
\label{tab:efficiency}
\end{table*}

\subsection{More Ablation Studies}

\subsubsection{Prompts are Better Distribution Calibrators.}
\label{prompt_vs_ln}
In many existing TTA works~\cite{AdaBN, Tent, EATA}, optimizing only the parameters of the normalization layer is regarded as a shortcut for calibrating the target feature distribution. In contrast, BriMPR calibrates the unimodal target feature distribution by optimizing the embedded prompts at each layer of the modality-specific encoders. As illustrated in Fig.~\ref{fig:prompt_vs_ln}, when using the same loss function ${\cal L}_{\mathrm{PMGFA}}$, the variant ${\cal L}_{\mathrm{PMGFA}-\text{LN}}$ that optimizes the LayerNorm parameters consistently underperforms prompt optimization across all tasks, while also requiring a larger number of trainable parameters.

\begin{figure}[t]
\centering
\includegraphics[width=1\columnwidth]{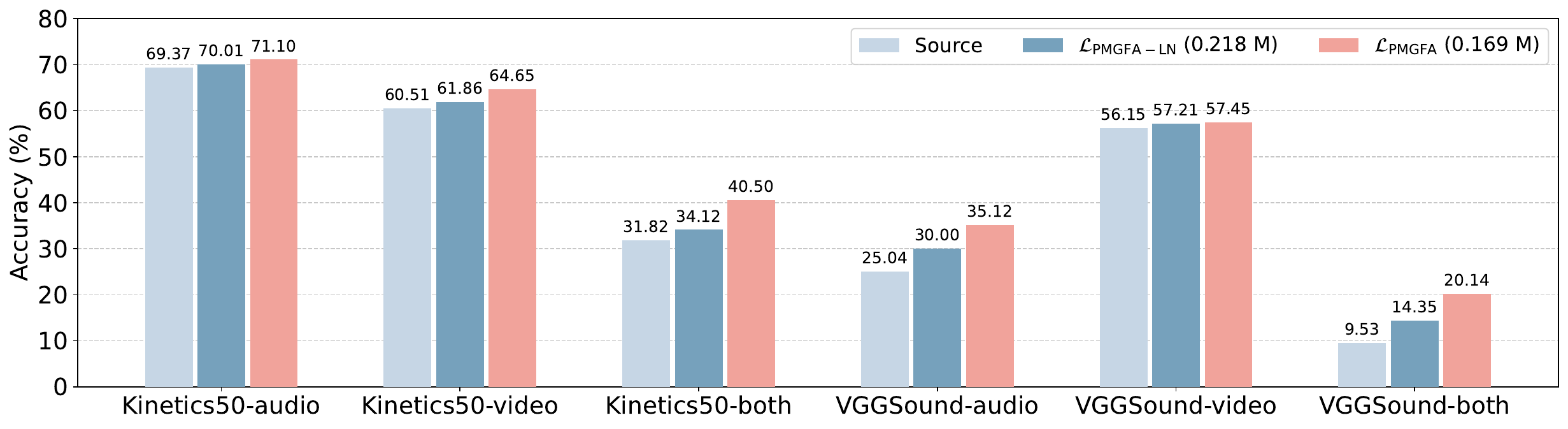}
\caption{Comparison between updating LN parameters and updating prompts.}
\label{fig:prompt_vs_ln}
\end{figure}

\subsubsection{Impact of $\tau_0$, $D_0$ and $\tau$.}
We analyze the hyperparameters $\tau_0$, $D_0$ (in \text{AdaTP}) and $\tau$ (in Eq.~\eqref{iicl}) on Kinetics50-C. As shown in Fig.~\ref{fig:ab}, $\tau_0$ and $D_0$ are insensitive within reasonable ranges under both unimodal and multimodal corruption. This allows \text{AdaTP} to adaptively set temperatures and mitigate overconfident pseudo-labels. Fig.~\ref{fig:tau} shows that our method performs stably under different $\tau$.

\begin{figure*}[t]
\centering

\begin{subfigure}[b]{0.48\textwidth}
\includegraphics[width=0.9\textwidth]{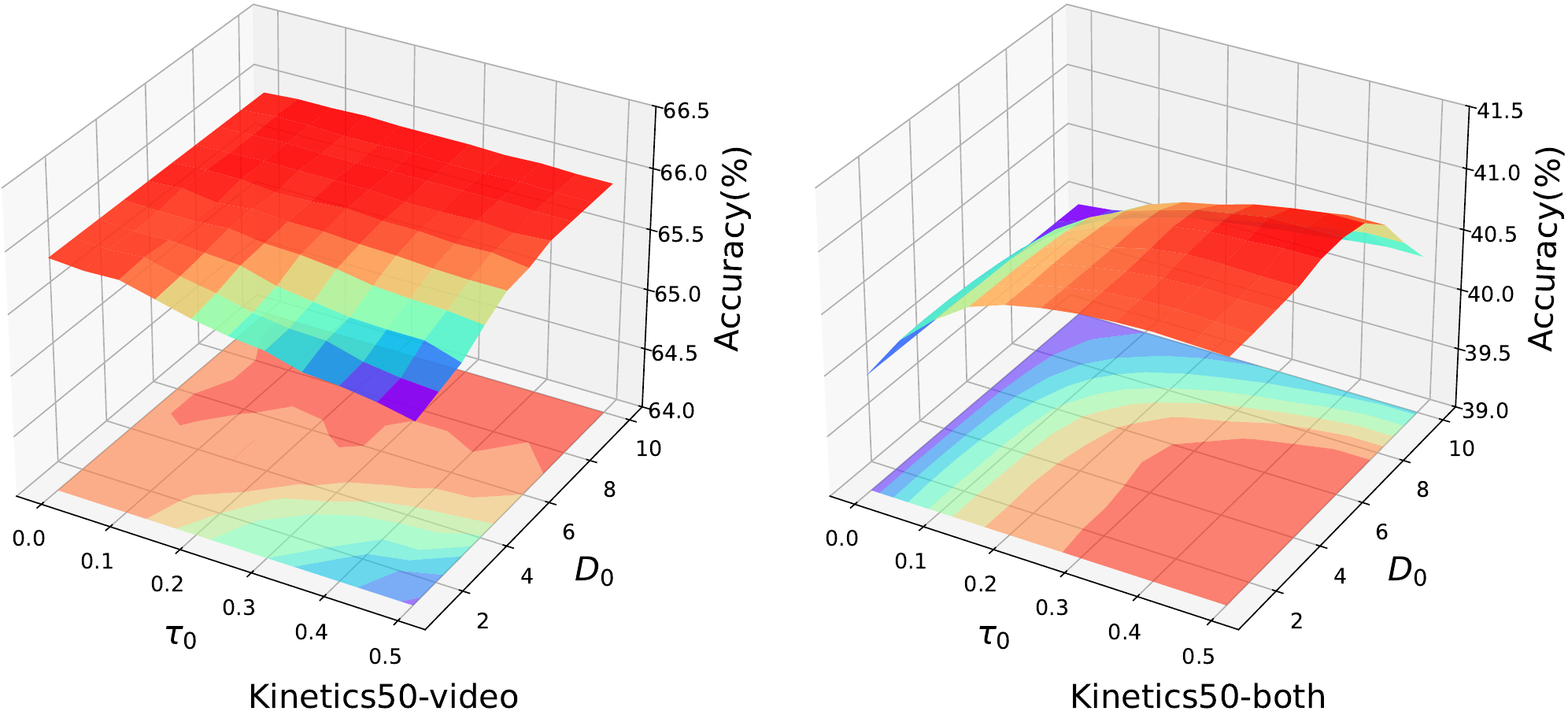}
\caption{$\tau_0$, $D_0$}
\label{fig:ab}
\end{subfigure}
\begin{subfigure}[b]{0.45\textwidth}
\includegraphics[width=0.9\textwidth]{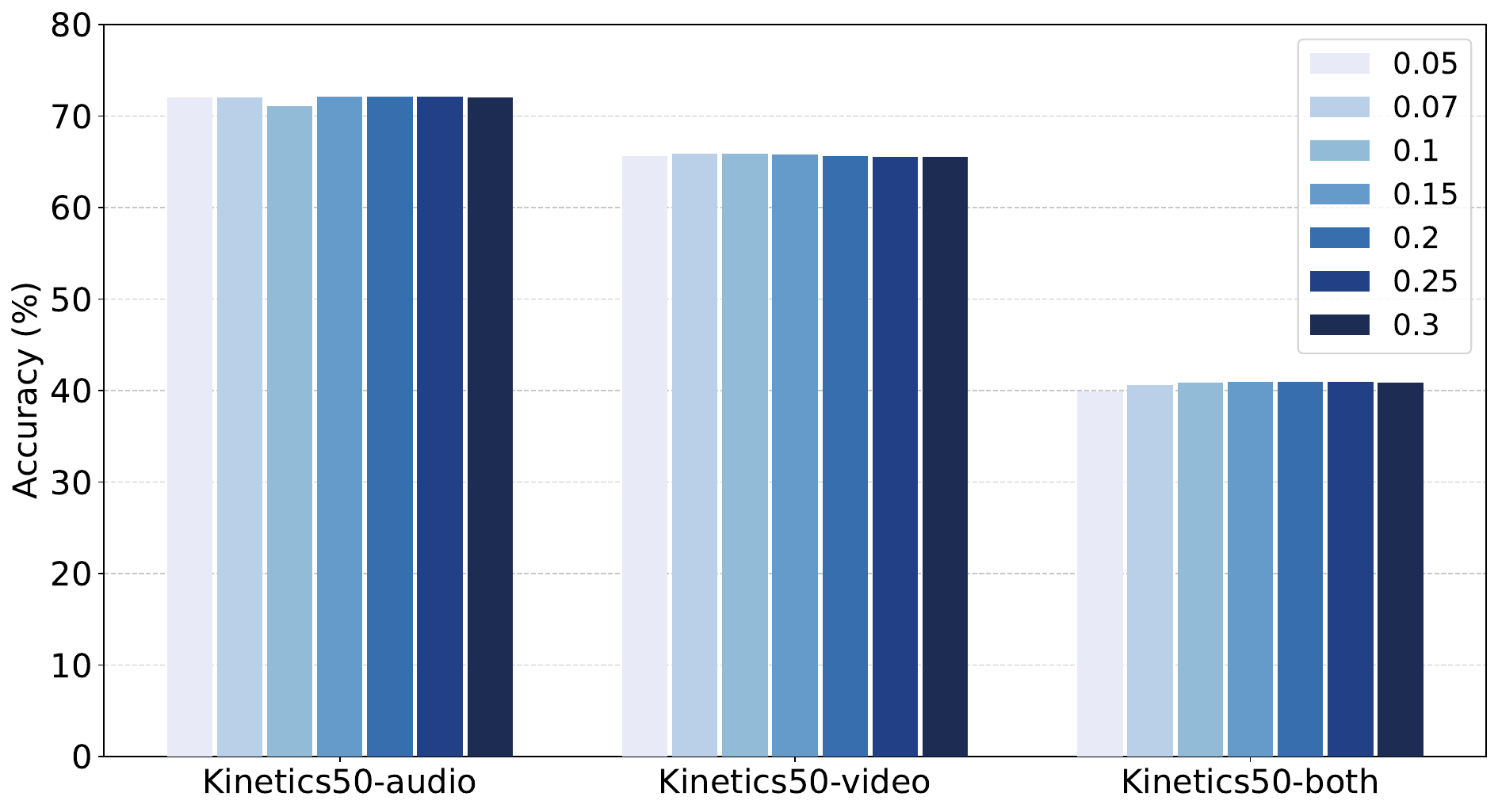}
\caption{$\tau$}
\label{fig:tau}
\end{subfigure}

\caption{Analysis of $\tau_0$, $D_0$ and $\tau$ on Kinetics50-C.}
\label{fig:hyper1}
\end{figure*}

\subsubsection{Impact of Number of Prompts.}
In Tab.~\ref{tab:prompt_number}, we present the results for different numbers of prompts, with 10 being the default value adopted by BriMPR. Previous prompt tuning work~\cite{VPT} indicates that the optimal number of prompts varies across different tasks. In our experiments on multimodal test-time adaptation task, adding more learnable prompts generally leads to performance improvements. Nevertheless, thanks to the parameter efficiency of prompt tuning, even a small number of prompts proves highly competitive, relieving us from the need to exhaustively search for the optimal prompt count.

\begin{table*}[t]
\centering
\setlength{\tabcolsep}{1mm}
\small
\begin{tabular}{cccccccc}
\toprule
Num. of Prompts & Ks50-audio & Ks50-video & Ks50-both & VGG-audio & VGG-video & VGG-both & Params (M) \\ \midrule
1  & 71.22 & 63.80 & 37.95 & 35.58 & 57.62 & 19.05 & 0.017 \\
3  & 71.48 & 64.55 & 39.24 & 36.31 & 57.74 & 20.30 & 0.051 \\
5  & 71.73 & 64.91 & 39.94 & 36.33 & 57.69 & 20.48 & 0.084 \\
\rowcolor[cmyk]{0.193,0,0.2222,0} 
10 & 72.01 & 65.86 & 40.91 & 36.49 & 57.68 & 20.74 & 0.169 \\
20 & 72.19 & 66.07 & 41.90 & 36.50 & 57.69 & 20.98 & 0.338 \\ \bottomrule
\end{tabular}
\caption{Comparison of different numbers of prompts.}
\label{tab:prompt_number}
\end{table*}

\subsubsection{Impact of Number of Unlabeled Source Samples.}
By avoiding the use of covariance matrices in moment matching and adopting a non-squared formulation, $\mathcal{L}_{\text{PMGFA}}$ demonstrates greater robustness and significantly improves the quality of unimodal distribution alignment. In all the experiments, we pre-estimate the source statistics $\{\hat{\boldsymbol{\mu}}_{i}^{s, u}, \hat{\boldsymbol{\sigma}}_{i}^{s, u}\}_{i = 1}^N(u\in\{a, v\})$ using 32 unlabeled source data. As shown in Fig.~\ref{fig:source_number}, BriMPR performs stably under different amounts of available source data.

\begin{figure}[t]
\centering
\includegraphics[width=1\columnwidth]{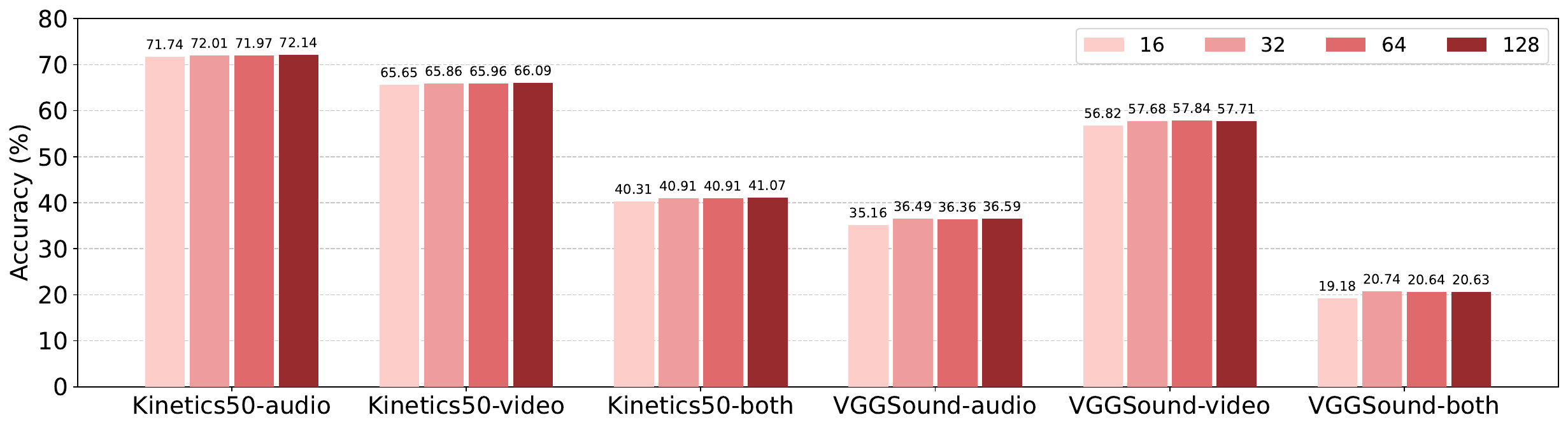}
\caption{Comparison of different numbers of unlabeled source samples.}
\label{fig:source_number}
\end{figure}

\subsubsection{Impact of Mask Ratio.}
In Fig.~\ref{fig:mask_ratio}, we further investigate the sensitivity of BriMPR to the mask ratio in $\mathcal{L}_\text{CMER}$, traversing the values of [0.3, 0.4, 0.5, 0.6, 0.7] around the default value of 0.5. Our method maintains stable performance across different mask ratios. Meanwhile, it can be observed that: under the unimodal shift setting, increasing the mask ratio tends to improve performance (e.g., VGGSound-audio: 0.3 $\rightarrow$ 0.7, accuracy 35.91\% $\rightarrow$ 36.97\%); whereas under the multimodal shift setting, an excessively high mask ratio degrades performance (e.g., Kinetics50-both: 0.3 $\rightarrow$ 0.7, accuracy 41.08\% $\rightarrow$ 40.19\%). This is because under the former setting, there exists a clean modality, and boldly discarding the information of this modality promotes the recovery of the corrupted modality; whereas under the latter setting, the severity of shift varies among each modality, and excessive masking may suppress useful information and introduce noise from unreliable modalities.

\begin{figure}[t]
\centering
\includegraphics[width=1\columnwidth]{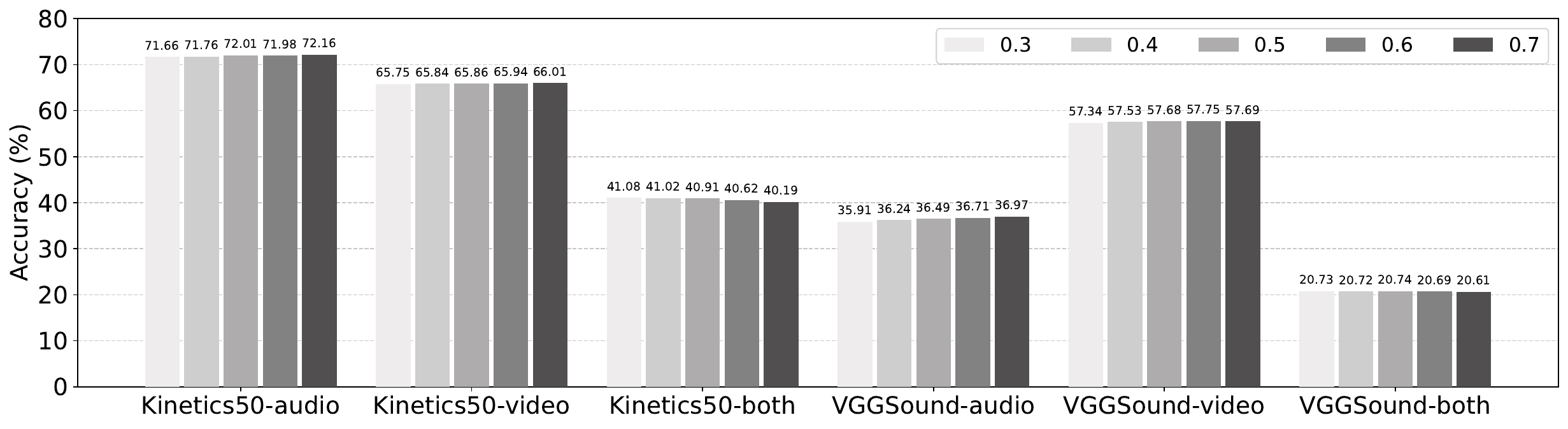}
\caption{Comparison of different mask ratios.}
\label{fig:mask_ratio}
\end{figure}

\end{document}